\documentclass[10pt]{article} 
\usepackage[preprint]{tmlr}

\usepackage{amsmath,amsfonts,bm}

\def\eqref#1{equation~\ref{#1}}

\def\1{\bm{1}}

\def\eps{{\epsilon}}

\def\rvz{{\mathbf{z}}}

\def\vmu{{\bm{\mu}}}

\def\va{{\bm{a}}}

\def\vx{{\bm{x}}}

\def\vz{{\bm{z}}}

\def\mI{{\bm{I}}}

\def\mSigma{{\bm{\Sigma}}}

\DeclareMathAlphabet{\mathsfit}{\encodingdefault}{\sfdefault}{m}{sl}
\SetMathAlphabet{\mathsfit}{bold}{\encodingdefault}{\sfdefault}{bx}{n}

\def\sX{{\mathbb{X}}}

\newcommand{\E}{\mathbb{E}}

\newcommand{\R}{\mathbb{R}}

\newcommand{\KL}{D_{\mathrm{KL}}}

\usepackage{amssymb}
\usepackage{amsthm}
\usepackage{graphicx}
\usepackage{booktabs}
\usepackage{float}
\usepackage{subcaption} 
\usepackage{multirow}
\usepackage{tabularx}
\usepackage{makecell}
\usepackage{hyperref}
\usepackage{tikz}
\usepackage{subfiles}
\usepackage{algorithm}
\usepackage{algpseudocode} 
\usetikzlibrary{arrows.meta, positioning, shapes.geometric, fit}

\theoremstyle{plain}    
\newtheoremstyle{boldtitle}  
  {3pt}   
  {3pt}   
  {\itshape}  
  {}      
  {\bfseries} 
  {}     
  {0.5em} 
  {\thmname{#1}\ \thmnumber{#2}\ \thmnote{\bfseries(#3)}} 
\theoremstyle{boldtitle}
\newtheorem{theorem}{Theorem}

\theoremstyle{definition}  

\theoremstyle{remark}     

\algrenewcommand\algorithmiccomment[1]{\hfill$\triangleright$~#1}

\newcommand{\AlgInput}[1]{\State \textbf{Input:} #1}
\newcommand{\AlgInit}[1]{\State \textbf{Initialize:} #1}
\newcommand{\AlgStage}[1]{\State \textbf{#1}}
\newcommand{\AlgReturn}[1]{\State \textbf{return} #1}
\algrenewcommand\algorithmicthen{}

\def\vsigma{{\bm{\sigma}}}
\def\veps{{\bm{\eps}}}

\title{VAE-Inf: A statistically interpretable generative paradigm for imbalanced classification\thanks{Corresponding authors: Ruijian Han and Yancheng Yuan.}
}

\author{
\name Hongfei Wu \email hung-fei.wu@connect.polyu.hk \\
\addr Department of Data Science and Artificial Intelligence \\
The Hong Kong Polytechnic University
\AND
\name Ruijian Han \email ruijian.han@polyu.edu.hk \\
\addr Department of Data Science and Artificial Intelligence\\
The Hong Kong Polytechnic University
\AND
\name Yancheng Yuan \email yancheng.yuan@polyu.edu.hk \\
\addr Department of Applied Mathematics \\
The Hong Kong Polytechnic University
}

\begin{document}

\maketitle

\begin{abstract}
Imbalanced classification remains a pervasive challenge in machine learning, particularly when minority samples are too scarce to provide a robust discriminative boundary. In such extreme scenarios, conventional models often suffer from unstable decision boundaries and a lack of reliable error control. To bridge the gap between generative modeling and discriminative classification, we propose a two-stage framework \textbf{VAE-Inf} that integrates deep representation learning with statistically interpretable hypothesis testing. 
In the first stage, we adopt a one-class modeling perspective by training a variational autoencoder (VAE) exclusively on majority-class data to capture the underlying reference distribution. The resulting latent posteriors are aggregated via a Wasserstein barycenter to construct a global Gaussian reference model, providing a geometrically principled baseline for the majority class. In the second stage, we transform this generative foundation into a discriminative classifier by fine-tuning the encoder with limited minority samples. This is achieved through a novel distribution-aware loss that enforces probabilistic separation between classes based on variance-normalized projection statistics. 
For inference, we introduce a projection-based score that admits a natural hypothesis testing interpretation, allowing for a distribution-free calibration procedure. This approach yields exact finite-sample control of the Type-I error (false positive rate) without relying on restrictive parametric assumptions. 
Extensive experiments on diverse real-world benchmarks demonstrate that our framework achieves competitive performance against other approaches. The codes are available upon request.
\end{abstract}

\section{Introduction}

Imbalanced classification is a fundamental challenge in machine learning and statistics, where the majority class contains far more samples than the minority class. This phenomenon commonly arises in real-world applications such as pattern recognition~\citep{dong2018imbalanced}, object detection~\citep{oksuz2020imbalance} and medical diagnosis~\citep{HAIXIANG2017220}.  
In these domains, correct identification of minority samples is essential, while misclassifying them can be costly or even harmful.
For instance, in disease screening, missing even a few true {positive} cases may lead to severe medical consequences or even loss of life.  
However, standard learning methods typically focus on maximizing overall accuracy~\citep{pei2023survey} 
and assume equal importance among all samples, 
which causes the learned model to be dominated by the majority class and perform poorly in rare but crucial cases.

Many strategies have been developed to address class imbalance in traditional learning frameworks. Existing approaches can generally be categorized into three groups:
(1) Resampling methods focus on resampling the training data to rebalance class proportions~\citep{he2009learning}. 
Typical approaches include random under-sampling of the majority class~\citep{tahir2009multiple} and over-sampling of the minority class, 
as well as synthetic data generation techniques such as the Synthetic Minority Over-sampling Technique (SMOTE) and its variants~\citep{chawla2002smote, zheng2015oversampling}. 
(2) Cost-sensitive learning methods~\citep{elkan2001foundations, thai2010cost, castro2013novel} modify the loss function to impose higher penalties on misclassified minority samples. 
This allows the model to pay greater attention to rare examples. 
(3) Ensemble learning methods combine multiple classifiers or resampled subsets to improve performance~\citep{galar2011review}. 
Representative ensemble-based algorithms include AdaBoost~\citep{freund1996experiments}, SMOTEBoost~\citep{chawla2003smoteboost}, and Bagging~\citep{galar2013eusboost}, 
which integrate sampling or reweighting strategies within the ensemble framework.
However, these traditional approaches may struggle to effectively capture complex structures in high-dimensional data~\citep{huang2025deep}.

Recently, deep learning has achieved remarkable success due to its exceptional learning capacity, and a growing number of methods have applied deep architectures to solve imbalance classification problems.
In particular, deep generative models are integrated to offer high-quality synthetic samples for the minority class~\citep{mullick2019generative,wang2020deep}. 
Generative Adversarial Networks (GANs)~\citep{goodfellow2014generative} have been widely used as oversampling tools for image data~\citep{sampath2021survey}, 
with variants such as BAGAN~\citep{mariani2018bagan} and CGAN~\citep{nazari2021oversampling} designed to improve generation quality. 
Similarly, Variational Autoencoders (VAEs)~\citep{kingma2013auto} have also been adopted to generate minority samples in a probabilistic latent space~\citep{wan2017variational,zhang2018over}. 
Moreover, new loss functions, {such as Focal loss~\citep{lin2017focal} and Label-Distribution-Awareness margin (LDAM)~\citep{cao2019learning}, have been proposed to replace the commonly used} Cross Entropy (CE) loss to improve discrimination. Specifically, the
Focal loss reduces the weight of well-classified samples to focus training on difficult instances, 
while the LDAM loss explicitly adjusts the classification margin according to the frequency of the class. 

Despite the progress of deep learning-based approaches, extreme class imbalance remains a fundamental challenge: the minority class is often too sparse to faithfully characterize its underlying distribution. 
While data-level generation can alleviate sample scarcity, its effectiveness is strictly capped by the fidelity of synthetic examples due to the very limited available training samples of the minority class. For instance, GAN-based oversampling frequently suffers from mode collapse \citep{arjovsky2017wasserstein}, while existing latent-space models often rely on restrictive symmetric assumptions \citep{guo2019discriminative} that fail to capture the complex, asymmetric feature structures of real-world data. 
These limitations underscore a critical insight: in extreme scenarios, defining what the minority class ``is'' is significantly harder than defining what it ``is not''.

Building on this insight, we reframe extreme imbalanced classification through the lens of {statistical inference modeling}, drawing inspiration from anomaly detection~\citep{ruff2018deep,pang2019deep, ruff2020deep, zhou2021feature}. 
Instead of constructing fragile models for a sparse minority class, we focus on establishing a robust profile of the majority class (normality) and identifying minority samples as significant deviations from this regularity. In particular, we propose a two-stage framework VAE-Inf that integrates deep representation learning with statistical inference.
In {Stage 1}, a VAE is pretrained exclusively on majority-class data to learn a continuous probabilistic latent manifold. In {Stage 2}, the encoder is fine-tuned using limited minority samples through a distribution-aware objective, which enhances class separability while preserving the learned majority structure. This design enables a statistically interpretable inference strategy, allowing for decision rules with controllable error behavior and finite-sample guarantees. Our numerical results demonstrate that even a small amount of labeled minority information can be utilized to improve the detection performance of the minority class, making it particularly suitable for imbalanced tasks. The main contributions of this paper are summarized as follows:

\begin{itemize}
    \item We propose a two-stage VAE-based framework VAE-Inf that constructs a reference distribution of the majority class and effectively utilizes limited minority samples through a distribution-aware fine-tuning process.

    \item We develop a projection-score calibration mechanism that provides finite-sample Type-I error control under exchangeability, while empirically achieving competitive Type-II trade-offs.
    
    \item Extensive experiment results on several important datasets demonstrate that our VAE-Inf achieves superior classification performance while maintaining reliable Type-I/II error control compared with imbalanced learning and anomaly detection baselines.

\end{itemize}

The remainder of this paper is organized as follows. We start with the notation and problem formulation. Section 2 presents the proposed method. Section 3 reports the experimental setup and results, and Section 4 concludes the paper.

\subsection{Notation and Problem Formulation}

Formally, consider a binary classification problem with input--label pairs $(\vx, y)$, 
where $\vx \in \sX \subseteq \mathbb{R}^p$ and $y \in \{1,2\}$. 
Assign $y=1$ to the majority class and $y=2$ to the minority class.
We formulate the problem from a hypothesis testing perspective. 
Given an observation $\vx$, we consider the hypotheses
\begin{equation}
    H_0: \vx \sim P_1 
    \quad \text{vs.} \quad 
    H_1: \vx \sim P_2,
    \label{eq:hypotheses}
\end{equation}
where $P_1$ and $P_2$ denote the data distributions of the majority and minority classes, respectively. 
A decision rule $\psi: \sX \to \{1,2\}$ assigns each sample to one of the two classes.
Type-I and Type-II errors are defined as
\(
 R_1(\psi) = \mathbb{P}_{\vx \sim P_1}\big(\psi(\vx)=2\big)\)
and
\(
R_2(\psi) = \mathbb{P}_{\vx \sim P_2}\big(\psi(\vx)=1\big).
\)
In general, $R_1(\psi)$ and $R_2(\psi)$ can not be minimized simultaneously, leading to an inherent trade-off between false positive and false negative rates. 

We adopt a single-error control perspective that prioritizes reliable regulation of one error type.
Specifically, given a user-specified tolerance level $\delta \in (0,1)$,  our goal is to construct a decision rule $\psi$ such that
\(
    R_1(\psi) \le \delta.
\)
This formulation emphasizes controlled statistical decision-making, rather than purely optimizing classification accuracy.

\section{Methodologies}
In this paper, we propose a novel framework VAE-Inf that addresses imbalanced classification by combining majority distribution modeling with statistically controlled decision-making.  Our core idea is to model and learn the distribution of the majority class in latent space, and utilize it to perform error-aware classification with limited minority data.
As illustrated in Figure~\ref{fig:framework}, the framework consists of two stages. 
In Stage 1, a VAE is pretrained exclusively on the majority-class samples to learn a latent reference distribution. 
In Stage 2, the encoder is fine-tuned using a distribution-aware objective that promotes separation between majority and minority representations while preserving the learned reference structure.
At inference time, we propose a projection-based statistic in the latent space, 
which serves as a decision score for hypothesis testing and enables controlled error trade-offs.

\begin{figure}[ht]
    \centering
    \includegraphics[width=0.9\textwidth]{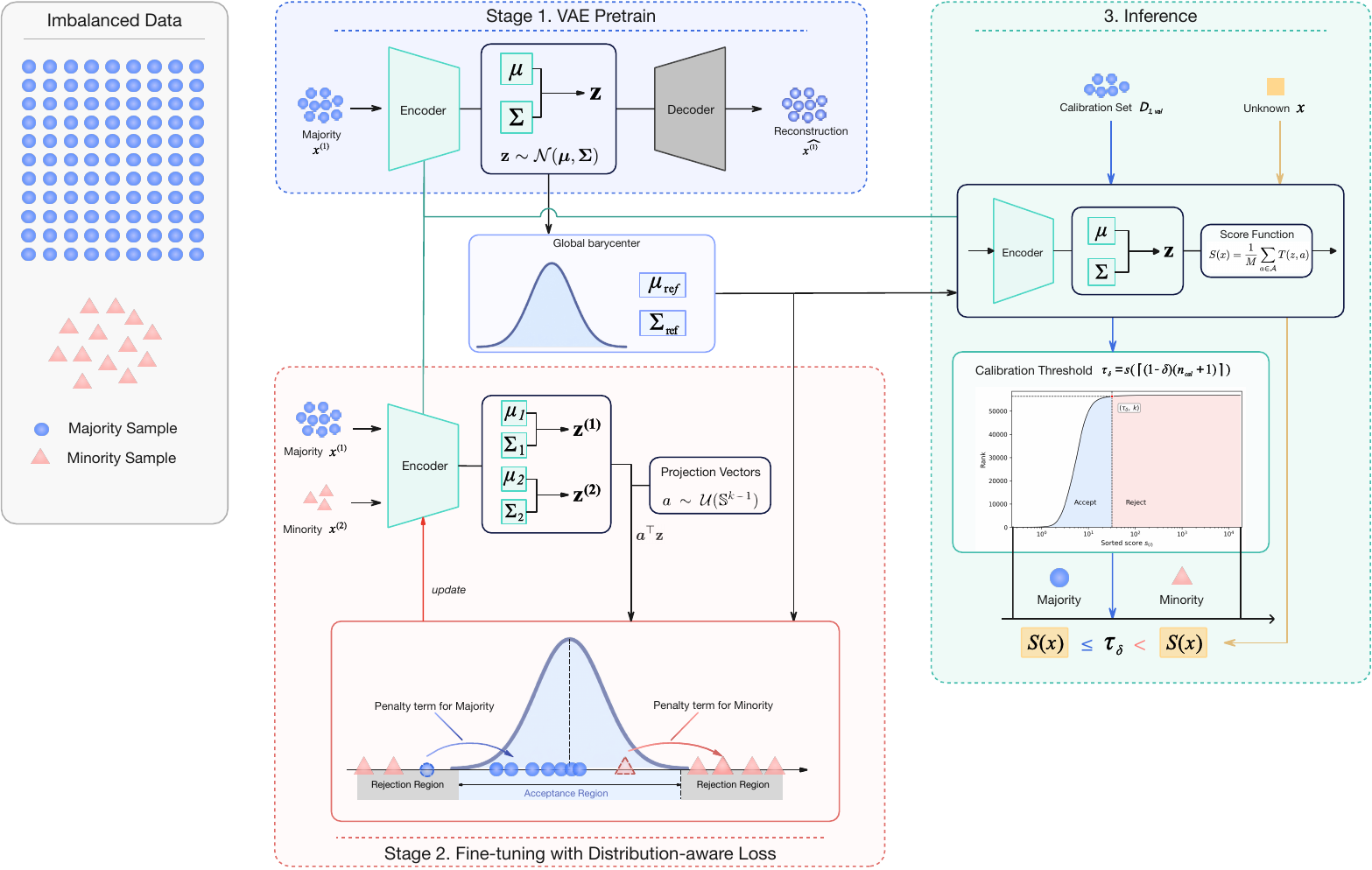}

    \caption{The overview of the proposed two-stage VAE-Inf for imbalanced classification. 
    (1) Stage~1 learns a latent reference distribution of the majority class in latent space using a VAE. 
    (2) Stage~2 refines the representation via a distribution-aware objective to enhance separation between majority and minority samples. 
    (3) At inference, a projection-based score is computed with respect to the learned reference distribution, enabling statistically principled decision-making with controlled error trade-offs.}
    
    \label{fig:framework}
\end{figure}

\subsection{Stage 1: Pretraining VAE on the Majority Class}

In the first stage, we pretrain a VAE exclusively on majority-class samples, aiming to learn a latent representation of the normal data manifold. This stage serves to construct a statistical reference that  characterizes the majority distribution in latent space.

\paragraph{VAE Training.}
Let $\mathcal{D}_1=\{\vx_i\}_{i=1}^{N_1}$ denote the set of majority-class samples with $\vx_i \in \mathbb{R}^d$. 
We employ a standard VAE with encoder $q_{\phi}(\vz|\vx)$ and decoder $p_{\theta}(\vx|\vz)$, where the encoder defines a diagonal Gaussian posterior
\(
q_{\phi}(\vz|\vx) = \mathcal{N}\big(\vz;\,\vmu_{\phi}(\vx),\,\mathrm{diag}(\vsigma^2_{\phi}(\vx))\big).
\)
Latent variables are sampled via the reparameterization trick:
\begin{equation}
\vz=\vmu_{\phi}(\vx)+\vsigma_{\phi}(\vx)\odot\veps,\quad \veps\sim\mathcal{N}(\mathbf{0},\mI),
\label{eq:reparameterization}
\end{equation}
where $\odot$ is the element-wise product.
The model is trained by maximizing the evidence lower bound (ELBO)~\citep{kingma2013auto}:
\begin{equation*}
\mathcal{L}_{\mathrm{VAE}}(\vx) 
= \E_{\vz \sim q_\phi(\vz|\vx)} \left[ \log p_\theta(\vx|\vz) \right] 
- \KL\!\left( q_\phi(\vz|\vx) \,\|\, p(\vz) \right),
\end{equation*}
where $p(\vz)=\mathcal{N}(0,\mI)$ is the standard Gaussian prior.
This training procedure yields a collection of posterior distributions $\{q_\phi(\vz|\vx_i)\}_{i=1}^{N_1}$, providing local probabilistic descriptions of majority samples in latent space.

\paragraph{Estimates of Global Mean and Variance}

While individual posteriors $q_\phi(\vz|\vx)$ provide local descriptions of each sample, we aim to construct a global characterization of the majority class in latent space. To this end, we aggregate the collection of Gaussian posteriors $\{ \mathcal{N}(\vz; \vmu_{\vx_i}, \text{diag}(\vsigma^2_\phi(\vx_i))) \}_{i=1}^{N_1}$ into a single {latent reference distribution}.
We define this reference as the 2-Wasserstein barycenter of the individual posteriors:
\begin{equation*}
\mathcal{N}(\vz; \vmu_{\text{ref}}, \mSigma_{\text{ref}}) 
= \arg \min_{\mathcal{N}(\vz; \vmu, \mSigma)} 
\sum_{i=1}^{N_1} W_2^2 \!\left( \mathcal{N}(\vz; \vmu_{\vx_i}, \text{diag}(\vsigma^2_\phi(\vx_i))), \; \mathcal{N}(\vz; \vmu, \mSigma) \right),
\end{equation*}
where $\vmu_{\vx_i} \in \R^k$ and $\text{diag}(\vsigma^2_\phi(\vx_i)) \in \R^{k \times k}$ denote the encoder-predicted posterior mean and corresponding diagonal covariance for each sample $\vx_i \in \mathcal{D}_1$. 
For two Gaussians $\mathcal{N}(\vz; \vmu_1, \mSigma_1)$ and $\mathcal{N}(\vz; \vmu_2, \mSigma_2)$ with diagonal covariances, the 2-Wasserstein distance reduces to
\begin{equation*}
W_2^2\!\left(\mathcal{N}(\vz; \vmu_1, \mSigma_1), \; \mathcal{N}(\vz; \vmu_2, \mSigma_2)\right)
= \|\vmu_1-\vmu_2\|_2^2 \;+\; \|\mSigma_1^{1/2} - \mSigma_2^{1/2}\|_F^2,
\end{equation*}
where $\|\cdot\|_F$ is the Frobenius norm.
Following \cite{mallasto2017learning}, the Wasserstein barycenter admits closed-form estimates for the mean and diagonal covariance:
\begin{align}
    \vmu_{\text{ref}} &= \frac{1}{N_1} \sum_{i=1}^{N_1} \vmu_{\vx_i}, 
    \label{eq:mu_overall_wasserstein} 
    \\
    \mSigma_{\text{ref}} &= \text{diag}\!\left( \left( \frac{1}{N_1} \sum_{i=1}^{N_1} \sqrt{\vsigma^2_\phi(\vx_i)} \right)^2 \right). 
    \label{eq:sigma_overall_wasserstein}
\end{align}

\subsection{Stage 2: Fine-tuning with a Distribution-aware Statistical Margin}

While Stage~1 provides a reference distribution for the majority class, the resulting latent space is not yet optimized for discrimination between the majority class and minority class. 
The objective of Stage~2 is therefore to fine-tune the encoder under a distribution-aware statistical criterion: we seek a representation in which majority samples remain compatible with the reference distribution learned in Stage~1, whereas minority samples are systematically pushed away from it.

\subsubsection{Projection-based Statistics for Hypothesis Testing}
It is challenging to conduct hypothesis testing in the latent space due to the high dimensionality of $\vz \in \mathbb{R}^k$. 
To address this, we leverage the projection property of multivariate Gaussians to reduce the high-dimensional comparison problem to a family of one-dimensional statistical comparisons. 
Under the Stage~1 reference model,
\(
\rvz \sim \mathcal{N}(\vmu_{\text{ref}}, \mSigma_{\text{ref}}),
\)
for any deterministic direction $\va \in \mathbb{R}^k$, the projection $\va^\top \rvz$ is univariate Gaussian:
\begin{equation*}
    \va^\top \rvz \sim \mathcal{N}(\va^\top \vmu_{\text{ref}},\, \va^\top \mSigma_{\text{ref}} \va).
\end{equation*}

To quantify the deviation of a latent sample from the majority reference along direction $\va$, we define the squared projected deviation
\[
d(\vz, \va) := \big(\va^\top \vz - \va^\top \vmu_{\text{ref}}\big)^2.
\]
Normalizing this quantity by the projected variance yields the projection statistic
\begin{equation}
    T(\vz, \va) 
    = \frac{d(\vz, \va)}{\va^\top \mSigma_{\text{ref}} \va}
    = \frac{\big(\va^\top \vz - \va^\top \vmu_{\text{ref}}\big)^2}{\va^\top \mSigma_{\text{ref}} \va}.
    \label{eq:statistic_T}
\end{equation}
When $\vz \sim \mathcal{N}(\vmu_{\text{ref}}, \mSigma_{\text{ref}})$ and $\va$ is fixed, the standardized projection satisfies
\[
\frac{\va^\top \vz - \va^\top \vmu_{\text{ref}}}
{\sqrt{\va^\top \mSigma_{\text{ref}} \va}}
\sim \mathcal{N}(0,1).
\]
Therefore, \(T(\vz,\va)\) provides a direction-wise test statistic for assessing whether a latent representation is statistically consistent with the majority reference distribution.

\subsubsection{Distribution-aware Statistical Margin Loss}

Let $\vx^{(1)} \in \mathcal{D}_1$ be a sample from the majority class and $\vx^{(2)} \in \mathcal{D}_2$ be a sample from the minority class. 
Their latent representations are sampled from the pretrained encoder posteriors via the reparameterization trick:
\[
\vz^{(i)} = \vmu_\phi(\vx^{(i)}) + \vsigma_\phi(\vx^{(i)}) \odot \veps,
\qquad
\veps \sim \mathcal{N}(\mathbf{0},\mI),
\qquad i=1,2.
\]

Under the Gaussian reference model, \(T(\vz,\va)\) 
follows a chi-square(\(\chi^2\)) distribution with one degree of freedom.
Therefore, for a prescribed threshold \(\alpha>0\), the event
\(
T(\vz,\va)\le \alpha
\)
defines a direction-wise acceptance region with respect to the majority reference distribution.
Based on the hypotheses in Eq.~(\ref{eq:hypotheses}), this yields a natural testing interpretation: if \(T(\vz,\va)\) lies in the acceptance region, \(H_0\) is not rejected; otherwise, \(H_0\) is rejected.

For a fixed projection direction \(\va\), the desired class separation can be expressed as
\begin{equation}
    T(\vz^{(1)},\va) \le \alpha,
    \qquad
    T(\vz^{(2)},\va) > \alpha,
    \label{eq:T_margin}
\end{equation}
where \(\alpha\) serves as a direction-wise statistical margin. 
Moreover, \(\alpha\) admits a direct probabilistic interpretation: choosing \(\alpha=c^2\) corresponds to a \(c\)-sigma tolerance in the standardized projected space. 
For example, the critical region $T(\vz, \va)>9$ corresponds to the $3\sigma$ rule, which rejects $H_0$ with significance level $\approx 0.27\%$.

Since $T(\vz, \va)$ is defined as Eq.~(\ref{eq:statistic_T}), 
Eq.~(\ref{eq:T_margin}) can be equivalently written as
\begin{equation*}
    d(\vz^{(1)},\va)\le \alpha\,(\va^\top \mSigma_{\text{ref}}\va),
    \qquad
    d(\vz^{(2)},\va)>\alpha\,(\va^\top \mSigma_{\text{ref}}\va).
\end{equation*}

Rather than relying on a single global discrepancy such as the Mahalanobis distance, we sample random directions $\va \sim \mathcal{U}(\mathbb{S}^{k-1})$. 
This allows the model to detect departures from the majority manifold across multiple latent-space orientations and provides diverse gradient signals during training, thereby discouraging overfitting to a single global metric.
We therefore design the following distribution-aware regularization loss:
\begin{equation}
\mathcal{L}_{\mathrm{reg}} = 
\mathbb{E}_{\vx^{(1)} \sim \mathcal{D}_1,\; \vx^{(2)} \sim \mathcal{D}_2}
\;
\mathbb{E}_{\va \sim \mathcal{U}(\mathbb{S}^{k-1})}
\left[
\big(d(\vz^{(1)}, \va) - \alpha (\va^\top \mSigma_{\text{ref}} \va)\big)_+ 
+ \beta \big(\alpha (\va^\top \mSigma_{\text{ref}} \va) - d(\vz^{(2)}, \va)\big)_+
\right],
\label{eq:reg_loss}
\end{equation}
where $(x)_+ := \max(0,x)$ for any $x \in \mathbb{R}$, and $\beta>0$ controls the relative strength of the minority penalty.
The first term is a \emph{majority-tightening} term, which penalizes majority samples whose projected deviations exceed the prescribed high-probability region of the reference model. 
The second term is a \emph{minority-pushing} term, which penalizes minority samples that remain too close to the majority reference and therefore fail to enter the rejection region.

\subsection{Inference with Distribution-Free Calibration}

The final component of VAE-Inf is a distribution-free calibration procedure that converts latent anomaly scores into statistically valid decision rules. 
While Stage~2 uses projection-based statistical structure to shape the latent space, the resulting scores are not automatically calibrated for decision-making. 
We employ an empirical calibration scheme that provides finite-sample control of the Type-I error under exchangeability, without requiring parametric assumptions on the score distribution. In short, exchangeability means that the joint distribution is invariant under permutations of the random variables  \citep{chow2003probability}.

Let $\mathcal{D}=\mathcal{D}_1\cup\mathcal{D}_2$ denote the full dataset, where $\mathcal{D}_1$ and $\mathcal{D}_2$ correspond to the majority and minority classes, respectively. 
$\mathcal{D}_1$ and $\mathcal{D}_2$ are further partitioned as
\(
\mathcal{D}_1
=
\mathcal{D}_{1,\mathrm{train}}
\cup
\mathcal{D}_{1,\mathrm{val}}
\cup
\mathcal{D}_{1,\mathrm{test}},
\quad
\mathcal{D}_2
=
\mathcal{D}_{2,\mathrm{train}}
\cup
\mathcal{D}_{2,\mathrm{val}}
\cup
\mathcal{D}_{2,\mathrm{test}}.
\)
The majority validation split $\mathcal{D}_{1,\mathrm{val}}$ is used as a held-out calibration set for threshold selection, and we denote
\(
n_{\mathrm{cal}}:=|\mathcal{D}_{1,\mathrm{val}}|.
\)
Given an input $\vx$, the encoder defines
\(
q_\phi(\vz\mid\vx)
=
\mathcal{N}\!\big(\vz;\,\vmu_\phi(\vx),\,\mathrm{diag}(\vsigma_\phi^2(\vx))\big).
\)
We obtain a latent code $\vz$ by sampling from this posterior via the reparameterization trick in Eq.~(\ref{eq:reparameterization}). 
The statistic \(T(\vz,\va)\) in Eq.~(\ref{eq:statistic_T}) measures deviation from the majority reference along the projection direction $\va$. 
To obtain a more stable anomaly score, we aggregate over a set of directions $\mathcal{A}=\{\va_1,\dots,\va_M\}$ sampled uniformly from the unit sphere, we define
\begin{equation}
S(\vx)=\frac{1}{M}\sum_{\va\in\mathcal{A}} T(\vz,\va),
\label{eq:anomaly_score}
\end{equation}
where larger values indicate stronger deviation from the majority distribution.
In practice, the projection set $\mathcal{A}$ is fixed after sampling.

We then formulate classification as the one-sided hypothesis test in Eq.~(\ref{eq:hypotheses}) 
using \(S(\vx)\) as the test value. 
To control the Type-I error at a prescribed level \(\delta\in(0,1)\), we calibrate the decision threshold using the empirical distribution of calibration scores from the majority class. 
Specifically, for each \(\vx_i\in\mathcal{D}_{1,\mathrm{val}}\), we compute
\(
S_i=S(\vx_i),
\)
and denote the sorted scores by
\(
S_{(1)}\le \cdots \le S_{(n_{\mathrm{cal}})}.
\)
The threshold is chosen as the empirical upper quantile, defined by
\begin{equation}
    k=\left\lceil (1-\delta)(n_{\mathrm{cal}}+1)\right\rceil,
    \qquad
    \tau_\delta=
    \begin{cases}
    S_{(k)}, & k\le n_{\mathrm{cal}},\\[3pt]
    +\infty, & k=n_{\mathrm{cal}}+1.
    \end{cases}
    \label{eq:threshold_tau}
\end{equation}

Under the sole assumption that the calibration scores and the score of any future majority sample are exchangeable~\citep{vovk2005algorithmic, Lei03072018, romano2019conformalized}. 
this threshold yields the following finite-sample marginal guarantee.
This result is formalized in Theorem~\ref{thm:calibration}, with proof deferred to Appendix~\ref{Chapter:finite-sample control}.

\begin{theorem}[Finite-Sample Type-I Error Control]
\label{thm:calibration}
Let $S_1,\dots,S_{n_{\mathrm{cal}}}$ be the scores of the majority-class calibration samples, and let $S_{n_{\mathrm{cal}}+1}$ be the score of a future majority sample.
Assume that the scores are exchangeable, i.e., the joint distribution of
\(
(S_1,\dots,S_{n_{\mathrm{cal}}},S_{n_{\mathrm{cal}}+1})
\)
is invariant under permutations. Equivalently, for every permutation $\pi$ of $\{1,\dots,n_{\mathrm{cal}}+1\}$,
\[
(S_1,\dots,S_{n_{\mathrm{cal}}},S_{n_{\mathrm{cal}}+1})
\overset{d}{=}
(S_{\pi(1)},\dots,S_{\pi(n_{\mathrm{cal}}+1)}).
\]
Let $\tau_\delta$ be the empirical upper quantile defined in Eq.~(\ref{eq:threshold_tau}) from the calibration scores.
Then
\[
\mathbb{P}\big(S_{n_{\mathrm{cal}}+1}\le \tau_\delta\big)\ge 1-\delta.
\]
\end{theorem}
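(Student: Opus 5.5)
The plan is the standard conformal rank argument. Write $n=n_{\mathrm{cal}}$ and $k=\lceil(1-\delta)(n+1)\rceil$ as in Eq.~(\ref{eq:threshold_tau}).

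First dispose of the trivial case $k=n+1$. There $\tau_\delta=+\infty$, so the event $\{S_{n+1}\le\tau_\delta\}$ has probability one and the claim holds.

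For $k\le n$, I will translate the event into a statement about the rank of $S_{n+1}$ among all $n+1$ scores. Specifically, I aim to show the inclusion
\[
\{S_{n+1}\le S_{(k)}\}\supseteq\{\#\{i\le n+1: S_i< S_{n+1}\}\le k-1\}.
\]
To see it, suppose $S_{n+1}>S_{(k)}$. Then at least $k$ calibration scores $S_{(1)},\dots,S_{(k)}$ lie strictly below $S_{n+1}$, so the strict rank count is at least $k$. This is the contrapositive of the inclusion.

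Next define $R_j=\#\{i\le n+1: S_i<S_j\}$ for each $j$. By exchangeability, the vector $(R_1,\dots,R_{n+1})$ has a permutation-invariant law, so $\mathbb P(R_{n+1}\le k-1)=\mathbb P(R_j\le k-1)$ for every $j$. Averaging over $j$ gives
\[
\mathbb P(R_{n+1}\le k-1)=\frac{1}{n+1}\,\mathbb E\,\#\{j: R_j\le k-1\}.
\]

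The key combinatorial step is a deterministic count: at least $k$ indices $j$ satisfy $R_j\le k-1$, even when there are ties. To prove it, sort the values. If $S_j$ is among the $k$ smallest values (with positions counted with multiplicity), then fewer than $k$ scores are strictly smaller than it. Ties only lower strict ranks, so they can only increase the count. This handling of ties is the one delicate point, and it is why I use strict inequality in the rank definition rather than assuming continuous scores.

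Combining the pieces,
\[
\mathbb P(S_{n+1}\le\tau_\delta)\ge\frac{k}{n+1}\ge 1-\delta,
\]
where the last inequality holds because $k\ge(1-\delta)(n+1)$ by the definition of the ceiling. This completes the proof.
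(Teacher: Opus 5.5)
Your proof is correct and has the same skeleton as the paper's: dispose of $k=n+1$, compare $S_{n+1}$ with $S_{(k)}$ through its rank among all $n+1$ scores, and finish with $\lceil x\rceil\ge x$. The difference is in the rank step, and it matters.

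The paper's proof invokes ``the assumption that the scores are almost surely distinct'' to assert that the rank of $S_{n+1}$ is exactly uniform on $\{1,\dots,n+1\}$, so that $\mathbb{P}(S_{n+1}\le\tau_\delta)=k/(n+1)$. That distinctness assumption is not among the theorem's hypotheses, which posit only exchangeability. With ties, the uniform-rank claim can fail.

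You avoid this assumption. You use only the one-sided inclusion $\{R_{n+1}\le k-1\}\subseteq\{S_{n+1}\le S_{(k)}\}$ with strict ranks. You replace uniformity by the symmetrization identity $\mathbb{P}(R_{n+1}\le k-1)=\frac{1}{n+1}\mathbb{E}\,\#\{j:R_j\le k-1\}$, which is valid since $(R_1,\dots,R_{n+1})$ is a permutation-equivariant function of an exchangeable vector. Then the deterministic count $\#\{j:R_j\le k-1\}\ge k$ survives ties.

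Your route therefore proves the theorem exactly as stated, but gives only the lower bound. The paper's route needs its extra continuity assumption, which is typically harmless here because each score depends on a continuous Gaussian draw of $\vz$. In exchange it gives the exact value $k/(n+1)$, and hence also the matching upper bound $\mathbb{P}(S_{n+1}\le\tau_\delta)<1-\delta+1/(n+1)$, although the paper does not use it.
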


Theorem~\ref{thm:calibration} guarantees that a future majority score is accepted with probability at least \(1-\delta\).
Based on the calibrated threshold \(\tau_\delta\), we define the final prediction rule for a test sample \(\vx\) by
\[
\psi(\vx)
=
\begin{cases}
1, & S(\vx)\le \tau_\delta,\\
2, & S(\vx)> \tau_\delta,
\end{cases}
\]
where class \(1\) denotes the majority class and class \(2\) denotes the minority class.
For a future majority sample \(\vx\sim P_1\), the Type-I error occurs if and only if \(S(\vx)>\tau_\delta\). Therefore,
\[
R_1(\psi)
=
\mathbb{P}_{\vx\sim P_1}\big(\psi(\vx)=2\big)
=
\mathbb{P}_{\vx\sim P_1}\!\big(S(\vx)>\tau_\delta\big)
=
1-\mathbb{P}_{\vx\sim P_1}\!\big(S(\vx)\le \tau_\delta\big)
\le \delta,
\]
where the last inequality follows from Theorem~\ref{thm:calibration}.
Hence, the resulting classifier is explicitly calibrated to control the Type-I error at level \(\delta\), yielding a transparent and statistically interpretable decision mechanism.
In particular, this error-control guarantee is distribution-free and does not depend on the particular modeling choices used to construct the latent score.
As a result, the significance level \(\delta\) serves as a reliable and operationally meaningful indicator of the Type-I error in practice. 

For clarity, the overall training and inference procedures are summarized in Algorithms~\ref{alg:train} and~\ref{alg:infer}, respectively.

\begin{algorithm}[ht]
\caption{Training algorithm of the proposed two-stage VAE-Inf}
\label{alg:train}
\begin{algorithmic}[1]
\AlgInput{ $\mathcal{D}=\mathcal{D}_{\mathrm{train}}\cup\mathcal{D}_{\mathrm{val}}\cup\mathcal{D}_{\mathrm{test}}$; $\alpha>0$; $\beta>0$; number of projections $M$; training epochs $E_1, E_2$}
\AlgInit{encoder $q_\phi(\vz|\vx)$, decoder $p_\theta(\vx|\vz)$, latent dimension $d_l$}

\AlgStage{Stage 1: VAE Training on Majority Class}
\For{epoch $=1$ to $E_1$}
    \State Sample the mini-batch $\mathcal{B}_1 \subset \mathcal{D}_{1,\mathrm{train}}$
    \State $(\vmu, \vsigma^2) \leftarrow q_\phi(\mathcal{B}_1)$ \Comment{Encoding}
    \State $\veps \sim \mathcal{N}(\mathbf{0}, \mI)$
    \State $\vz \leftarrow \vmu + \vsigma \odot \veps$ \Comment{Reparameterization}
    \State Compute $\mathcal{L}_{\mathrm{VAE}}$ and update $(\phi, \theta)$ \Comment{ELBO optimization}
\EndFor

\State Compute $(\vmu_{\text{ref}}, \mSigma_{\text{ref}})$ from $\mathcal{D}_{1,\mathrm{train}}$ using Eq.~(\ref{eq:mu_overall_wasserstein}) and Eq.~(\ref{eq:sigma_overall_wasserstein})\Comment{Wasserstein barycenter (diagonal)}

\AlgStage{Stage 2: Encoder Fine-tuning with Distribution-aware Loss}
\State Fix decoder parameters $\theta$ \Comment{Freeze decoder}
\For{epoch $=1$ to $E_2$}
    \State Sample $\mathcal{A}=\{\va_j\}_{j=1}^M$, where $\va_j \sim \mathcal{U}(\mathbb{S}^{k-1})$ \Comment{Random projections}
    \State Sample $\mathcal{B}_1 \subset \mathcal{D}_{1,\mathrm{train}}$ and $\mathcal{B}_2 \subset \mathcal{D}_{2,\mathrm{train}}$
    \State Draw latent codes $\vz$ for $\mathcal{B}_1 \cup \mathcal{B}_2$ via reparameterization
    \State Compute $\mathcal{L}_{\mathrm{reg}}$ in Eq.~(\ref{eq:reg_loss}) and update $\phi$ \Comment{Distribution-aware separation}
\EndFor

\AlgReturn{$(\phi, \theta)$, $(\vmu_{\text{ref}}, \mSigma_{\text{ref}})$}
\end{algorithmic}
\end{algorithm}

\begin{algorithm}[ht]
\caption{Inference with distribution-free calibration}
\label{alg:infer}
\begin{algorithmic}[1]
\AlgInput{trained $(\phi, \theta)$; $(\vmu_{\text{ref}}, \mSigma_{\text{ref}})$; $\mathcal{D}_{1,\mathrm{val}}$; projections $\mathcal{A}$; target Type-I level $\delta$; test sample $\vx$}

\AlgStage{Calibration: Empirical Quantile Threshold}
\State Compute calibration scores $S_i \leftarrow S(\vx_i)$ for all $\vx_i \in \mathcal{D}_{1,\mathrm{val}}$
\State Sort $\{S_i\}$ increasingly as $S_{(1)} \le \cdots \le S_{(n_{\mathrm{cal}})}$
\State $k \leftarrow \left\lceil(1-\delta)(n_{\mathrm{cal}}+1)\right\rceil, \qquad\tau_{\delta} \leftarrow S_{(k)}$ \Comment{Upper quantile threshold}

\AlgStage{Inference: Projection-based Scoring and Decision}
\State Encode $\vx$ and obtain $\vz$ via reparameterization \Comment{Latent representation}
\For{$\va \in \mathcal{A}$}
    \State Compute $T(\vz, \va)$ in Eq.~(\ref{eq:statistic_T}) \Comment{1-D normalized deviation}
\EndFor
\State Compute $S(\vx)$ in Eq.~(\ref{eq:anomaly_score}) \Comment{Aggregated anomaly score}
\If{$S(\vx)\le \tau_\delta$}
    \State $\psi(\vx) \leftarrow 1$ \Comment{Majority class}
\Else
    \State $\psi(\vx) \leftarrow 2$ \Comment{Minority class}
\EndIf

\AlgReturn{$\psi(\vx)$ and $S(\vx)$}
\end{algorithmic}
\end{algorithm}

\section{Experiments}
This section provides a comprehensive evaluation of our proposed VAE-Inf on multiple benchmark datasets, focusing on classification performance under severe class imbalance. 
In addition, we examine the trade-off between Type-I and Type-II errors to evaluate the effectiveness of the proposed error-control mechanism.

\subsection{Experimental Settings}

\paragraph{Datasets.}
We evaluate the proposed VAE-Inf on three data domains: tabular, image, and high-dimensional biomedical data. 
In the main text, we focus on three tabular datasets 
(\emph{Credit Card Fraud Detection}, \emph{Backdoor Attack Detection}, and \emph{Census}) 
and the TCGA Pan-Cancer dataset, which together cover diverse application domains, imbalance levels, and feature dimensions. 
We report the \emph{minority-class proportion} to characterize the degree of class imbalance, defined as
\(
\rho = \frac{N_2}{N_1 + N_2},
\)
where \(N_1\) and \(N_2\) denote the numbers of majority- and minority-class samples, respectively. 
A smaller value of \(\rho\) indicates a higher class imbalance level.
For TCGA, we use a one-vs-rest protocol in which each cancer type is treated in turn as the minority class. 
For data preprocessing, we use a \(6{:}2{:}2\) train/validation/test split with preserved class imbalance. 
Continuous features are standardized using training-set statistics and the same transformation is applied to the validation and test sets. 
Additional image-domain experiments on MNIST and CIFAR-10, together with detailed dataset descriptions and experimental protocols, are reported in Appendix~\ref{appendix:exp_details}.

\paragraph{Methods for Comparison.}
We compare our method with five representative deep anomaly detection baselines: DeepSVDD, DeepSAD, DevNet, FeaWAD, and PReNet~\citep{ruff2018deep, pang2019deep, zhou2021feature, pang2023deep}. 
These methods are selected to cover the main paradigms of deep anomaly detection under limited supervision; detailed descriptions are provided in Appendix~\ref{appendix:related_work}.

\paragraph{Implementation Details.}
All methods are trained under the same clean-label protocol for fair comparison. 
For implementation, all baselines follow their original papers or standardized open-source implementations from DeepOD and ADBench~\citep{xu2023deep, xu2024calibrated, han2022adbench}. 
For our method VAE-inf, the Stage~2 hyperparameters \(\alpha\) and \(\beta\) are selected on the validation set from a predefined grid and then fixed for test-set evaluation. 
We use a batch size of \(128\) and sample \(32\) random projection directions for projection-based scoring. 
Additional architectural choices, hyperparameter settings, and training details are given in Appendix~\ref{appendix:exp_details}. 

\paragraph{Evaluation metrics.}
We evaluate model performance using three widely adopted metrics: the Area Under the Receiver Operating Characteristic Curve (AUC-ROC), the Area Under the Precision--Recall Curve (AUC-PR) and F1-score.
Since AUC-PR is generally more informative than AUC-ROC under severe imbalance~\citep{saito2015precision}, we treat it as a primary threshold-free metric.
For F1-score, we adopt a unified thresholding strategy: for each dataset, all methods predict the same fraction of samples as minority-class instances, where this fraction is set to the dataset-specific minority-class proportion.
Additional discussion on the choice and interpretation of evaluation metrics is provided in Appendix~\ref{appendix:exp_details}.

\subsection{Main Experiment Results}
\subsubsection{Tabular Datasets}

Table~\ref{tab:multi_dataset_results} summarizes the performance of all methods on five tabular datasets with varying degrees of class imbalance.
Across these benchmarks, our method achieves highly competitive AUC-ROC and consistently strong AUC-PR and F1-score performance, demonstrating its strength in detecting rare anomalous events. In particular, AUC-PR---the most informative metric under severe imbalance---shows substantial and stable gains over all baselines. 
For instance, on the \textit{Credit Card} dataset with only $0.17\%$ minority-class samples, our method achieves the highest AUC-PR and F1-score, outperforming strong discriminative approaches such as DeepSAD and PReNet. 

To further evaluate robustness under extreme class imbalance, we construct more challenging variants of the \textit{Backdoor} and \textit{Census} datasets by reducing the minority-class proportion in the training set to approximately \(0.20\%\), 
while keeping the validation and test sets fixed for evaluation. 
A clear trend emerges as the training class imbalance becomes more severe. 
When the available minority-class samples are reduced to this extremely low level, the performance of existing methods decreases substantially, especially in terms of AUC-PR and F1-score. 
In contrast, our method maintains robust performance and delivers the best results across all metrics in the most imbalanced settings. 
These improvements suggest that learning a distribution-aware latent representation and using projection-based statistical anomaly scores remain effective even when discriminative baselines become less stable due to the scarcity of minority-class training samples.

In general, the tabular results demonstrate that our VAE-Inf performs on par with or better than state-of-the-art discriminative detectors under moderate imbalance, while offering pronounced gains when the minority-class proportion falls far below $1\%$. 

\begin{table}[ht]
    \centering
    \caption{Performance comparison (AUC-ROC / AUC-PR / F1-score, all in percentage).
    Here, $\rho=N_2/(N_1+N_2)$ denotes the minority-class proportion.
    Best and second-best results for each metric per dataset are highlighted in bold and underlined, respectively. 
    All results are averaged over ten independent runs and presented as mean~$\pm$~standard deviation.}
    \label{tab:multi_dataset_results}
    \renewcommand{\arraystretch}{1.1}
    \begin{tabular}{c|ccccc|c}
        \toprule
        {Metric} & {DevNet} & {FeaWAD} & {DeepSAD} & {PReNet} & {DeepSVDD} & {VAE-inf} \\
        \midrule
        
        \multicolumn{7}{c}{\textbf{Credit Card} ($\rho=0.17\%$)} \\
        \midrule
        AUC-ROC $\uparrow$ & 96.06$\pm$0.57 & 96.86$\pm$0.43 & 95.46$\pm$0.80 & \textbf{98.03}$\pm$0.30 & 86.19$\pm$6.82 & \underline{97.48}$\pm$0.31 \\
        AUC-PR $\uparrow$  & 45.32$\pm$6.93 & 56.85$\pm$6.00 & \underline{84.11}$\pm$1.38 & 70.50$\pm$4.92 & 25.95$\pm$15.13 & \textbf{85.61}$\pm$1.45 \\
        F1-score $\uparrow$& 51.53$\pm$4.26 & 59.59$\pm$4.45 & \underline{81.94}$\pm$1.45 & 71.22$\pm$2.57 & 35.40$\pm$13.61 & \textbf{83.57}$\pm$1.87 \\
        \midrule
        
        \multicolumn{7}{c}{\textbf{Backdoor} ($\rho=2.44\%$)} \\
        \midrule
        AUC-ROC $\uparrow$ & 98.76$\pm$0.44 & 99.09$\pm$0.21 & \textbf{99.79}$\pm$0.10 & 95.74$\pm$0.27 & 78.01$\pm$7.06 & \underline{99.31}$\pm$0.13 \\
        AUC-PR $\uparrow$  & 93.61$\pm$0.59 & 91.92$\pm$1.85 & \textbf{97.61}$\pm$0.32 & 89.58$\pm$0.48 & 36.88$\pm$5.36 & \underline{97.41}$\pm$0.63 \\
        F1-score $\uparrow$& 89.55$\pm$0.83 & 88.52$\pm$1.75 & \textbf{94.14}$\pm$0.29 & 86.91$\pm$0.35 & 40.70$\pm$4.25 & \underline{93.60}$\pm$1.01 \\
        \midrule
        
        \multicolumn{7}{c}{\textbf{Backdoor} ($\rho=0.20\%$)} \\
        \midrule
        AUC-ROC $\uparrow$ & 99.17$\pm$0.34 & 97.60$\pm$0.88 & \underline{99.24}$\pm$0.27 & 95.65$\pm$0.34 & 86.25$\pm$6.50 & \textbf{99.26}$\pm$0.34 \\
        AUC-PR $\uparrow$  & 94.14$\pm$0.24 & 83.35$\pm$3.89 & \underline{95.49}$\pm$0.80 & 86.66$\pm$0.44 & 54.74$\pm$8.17 & \textbf{96.07}$\pm$0.95 \\
        F1-score $\uparrow$& 89.96$\pm$0.76 & 83.51$\pm$2.45 & \underline{91.81}$\pm$0.47 & 85.71$\pm$0.34 & 52.66$\pm$9.57 & \textbf{93.73}$\pm$1.88 \\
        \midrule
        
        \multicolumn{7}{c}{\textbf{Census} ($\rho=6.20\%$)} \\
        \midrule
        AUC-ROC $\uparrow$ & 90.49$\pm$0.65 & 89.80$\pm$0.65 & 90.21$\pm$0.37 & \underline{92.66}$\pm$0.04 & 56.18$\pm$6.39 & \textbf{93.88}$\pm$0.08 \\
        AUC-PR $\uparrow$  & 39.72$\pm$3.38 & 41.27$\pm$4.45 & 49.93$\pm$0.53 & \underline{54.07}$\pm$0.19 & 7.42$\pm$0.83 & \textbf{59.04}$\pm$0.39 \\
        F1-score $\uparrow$& 43.82$\pm$2.54 & 44.99$\pm$3.63 & 50.30$\pm$0.37 & \underline{52.67}$\pm$0.17 & 7.61$\pm$1.03 & \textbf{55.70}$\pm$0.42 \\
        \midrule
        
        \multicolumn{7}{c}{\textbf{Census} ($\rho=0.21\%$)} \\
        \midrule
        AUC-ROC $\uparrow$ & \underline{88.88}$\pm$0.50 & 86.76$\pm$1.07 & 75.12$\pm$1.01 & 88.00$\pm$0.47 & 57.44$\pm$4.08 & \textbf{90.61}$\pm$0.28 \\
        AUC-PR $\uparrow$  & 42.16$\pm$0.37 & 30.98$\pm$2.23 & 26.43$\pm$0.86 & \underline{46.68}$\pm$0.50 & 7.95$\pm$0.61 & \textbf{52.39}$\pm$1.26 \\
        F1-score $\uparrow$& 44.12$\pm$0.30 & 37.01$\pm$2.23 & 28.37$\pm$1.28 & \underline{47.23}$\pm$0.43 & 9.21$\pm$0.95 & \textbf{51.08}$\pm$1.05 \\
        
        \bottomrule
    \end{tabular}
\end{table}

\subsubsection{TCGA Pan-Cancer Results}

Table~\ref{tab:tcga_combined_results} summarizes the performance across 33 one-vs-rest cancer detection tasks on the TCGA pan-cancer dataset. This benchmark is particularly challenging due to its extremely high dimensionality ($20{,}531$ genomic features) and highly skewed class distribution. 
Across all three metrics, our method ranks consistently among the top-performing approaches, achieving the highest AUC-PR (${95.58}$) and F1-score (${93.52}$), while also attaining the second-highest AUC-ROC.

To further evaluate performance under extreme imbalance in realistic rare disease scenarios, 
we examine five cancer types with prevalence below $1\%$ in the cohort. 
As shown in Table~\ref{tab:tcga_combined_results}, several baselines struggle in this setting. DeepSAD and PReNet remain competitive, though their performance varies considerably across metrics. PReNet obtains the highest AUC-ROC ($99.55$) but yields lower AUC-PR, indicating a less favorable precision–recall balance when minority-class samples are exceedingly scarce. In contrast, our method achieves the best AUC-PR ($79.66$) and F1-score ($76.24$), and the second-best AUC-ROC, demonstrating improved stability across all three evaluation measures under ultra-imbalanced conditions.
These findings highlight that the proposed generative–statistical modeling approach is competitive on average across all 33 tasks and becomes particularly advantageous when minority classes are extremely rare. 
By explicitly learning the majority-class distribution and grounding anomaly detection in projection-based deviation scoring, our method maintains strong discriminative ability even in high-dimensional, ultra-imbalanced biomedical 
settings.

\begin{table}[ht]
    \centering
    \caption{Performance comparison on TCGA datasets (AUC-ROC / AUC-PR / F1-score, all in percentage). 
    Best and second-best results for each metric per dataset are highlighted in bold and underlined, respectively. 
    Results for TCGA-Full are averaged over all $33$ one-vs-rest experiments, while results for TCGA-Rare are averaged over five selected rare-cancer experiments with minority proportion $\rho \leq 1\%$.}
    \label{tab:tcga_combined_results}
    \renewcommand{\arraystretch}{1.1}
    \begin{tabular}{c|ccccc|c}
        \toprule
        {Metric} & {DevNet} & {FeaWAD} & {DeepSAD} & {PReNet} & {DeepSVDD} & {VAE-inf} \\
        \midrule

        \multicolumn{7}{c}{\textbf{TCGA-Full}} \\
        \midrule
        AUC-ROC $\uparrow$ & 99.08$\pm$2.32 & 98.60$\pm$1.84 & 98.62$\pm$3.06 & \textbf{99.85}$\pm$0.24 & 50.60$\pm$7.06 & \underline{99.58}$\pm$0.99 \\
        AUC-PR $\uparrow$  & 88.51$\pm$20.96 & 82.66$\pm$20.35 & \underline{94.53}$\pm$11.18 & 94.06$\pm$12.31 & 3.47$\pm$2.33 & \textbf{95.58}$\pm$8.16 \\
        F1-score $\uparrow$& 85.19$\pm$21.95 & 79.00$\pm$19.39 & \underline{92.51}$\pm$11.28 & 91.00$\pm$14.27 & 3.51$\pm$3.50 & \textbf{93.52}$\pm$9.17 \\
        \midrule

        \multicolumn{7}{c}{\textbf{TCGA-Rare} (five selected types, $\rho\leq1\%$)} \\
        \midrule
        AUC-ROC $\uparrow$ & 96.35$\pm$4.70 & 97.82$\pm$1.27 & 93.12$\pm$4.86 & \textbf{99.55}$\pm$0.33 & 55.19$\pm$2.91 & \underline{98.32}$\pm$1.93 \\
        AUC-PR $\uparrow$  & 63.62$\pm$25.16 & 51.83$\pm$27.68 & 73.00$\pm$15.89 & \underline{73.73}$\pm$19.74 & 1.82$\pm$1.40 & \textbf{79.66}$\pm$10.35 \\
        F1-score $\uparrow$& 60.05$\pm$23.72 & 51.82$\pm$26.77 & \underline{71.73}$\pm$15.21 & 67.27$\pm$21.75 & 3.61$\pm$4.94 & \textbf{76.24}$\pm$10.05 \\

        \bottomrule
    \end{tabular}
\end{table}

\subsubsection{Metrics Discussion}

While our method is not always the top performer in AUC-ROC, it consistently ranks strongest in AUC-PR and F1-score across nearly all datasets, which are more informative in severely imbalanced settings.
As noted in~\cite{saito2015precision}, when the minority-class proportion is extremely small, ROC-based evaluation can be overly influenced by the large number of true negatives. Under such conditions, even large differences in false positives produce only minimal changes in the false-positive rate, causing AUC-ROC to remain high and to be relatively insensitive to improvements in minority-class detection. In contrast, the precision--recall curve directly captures the trade-off between true 
positives and false positives, making AUC-PR far more informative under severe imbalance. Therefore, the consistently superior AUC-PR and F1 results of our method provide a more reliable indication of practical detection capability, even in cases where AUC-ROC differences are small. 

\subsection{Error Control Results}
To evaluate the statistical reliability of the proposed decision rule, we examine how the Type-I and Type-II errors vary with the inference-time threshold \(\tau\) applied to the score  \(S\). 
After training the model,
we keep all network parameters fixed and sweep the inference threshold \(\tau\) over \(100\) uniformly sampled values. 
For each threshold, we compute the corresponding errors on both the validation and test sets. 
In Figure~\ref{fig:type_error_all}, solid lines denote Type-I errors and dashed lines denote Type-II errors, with validation and test curves plotted together to assess the stability of the calibration rule across data splits.

\begin{figure}[ht]
    \centering
    \begin{subfigure}{0.44\textwidth}
        \centering
        \includegraphics[width=\linewidth]{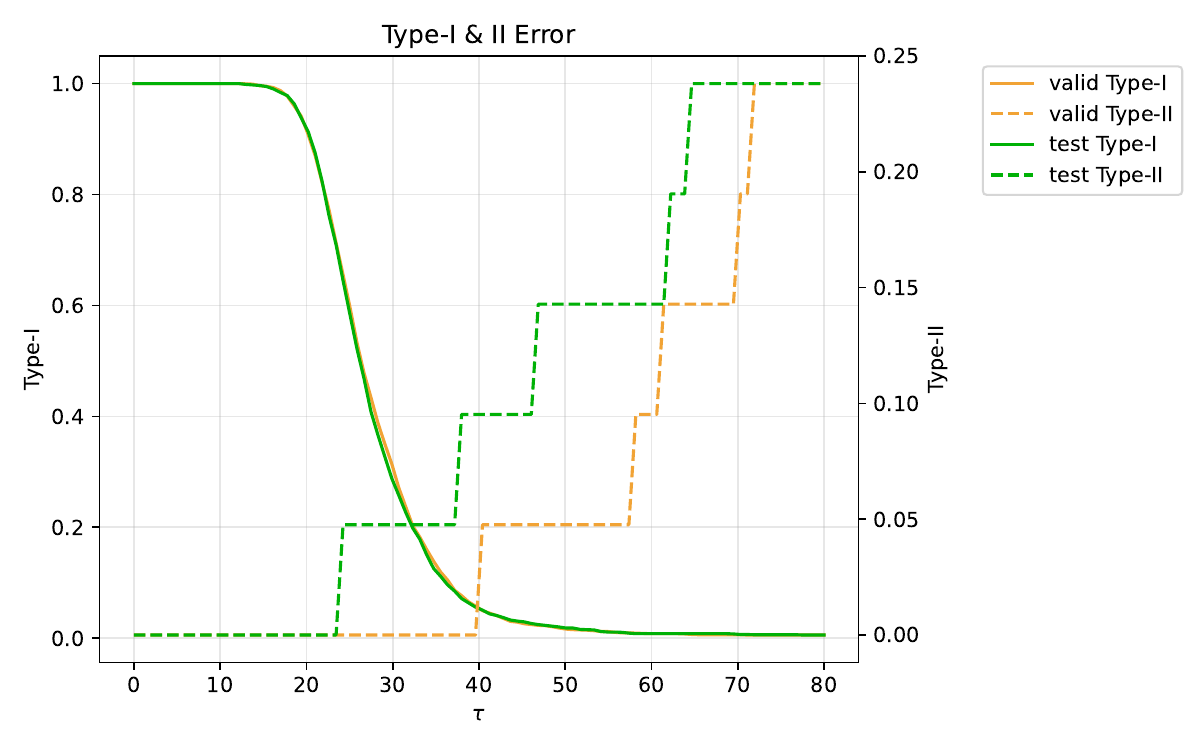}
        \caption{TCGA (MAD \(=0.0031\)).}
        \label{fig:type_error_tcga}
    \end{subfigure}
    \hfill
    \begin{subfigure}{0.44\textwidth}
        \centering
        \includegraphics[width=\linewidth]{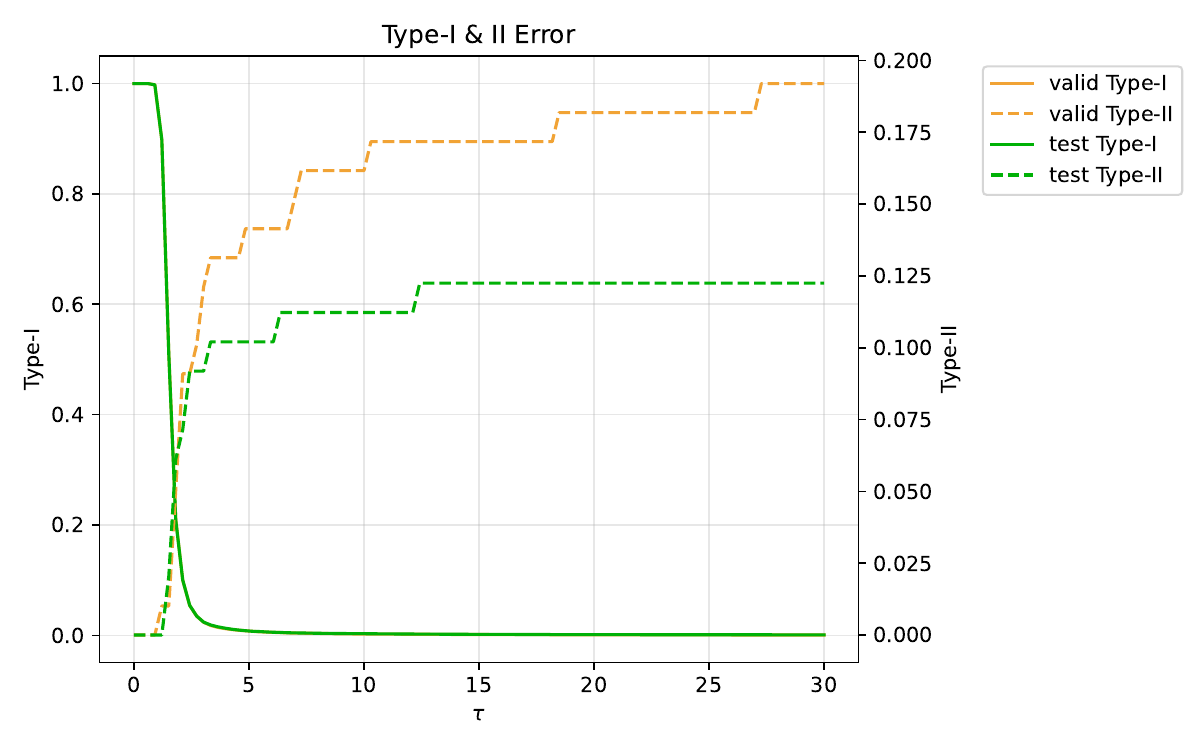}
        \caption{Credit Card (MAD \(=0.0003\)).}
        \label{fig:type_error_credit}
    \end{subfigure}

    \vspace{0.5em}

    \begin{subfigure}{0.44\textwidth}
        \centering
        \includegraphics[width=\linewidth]{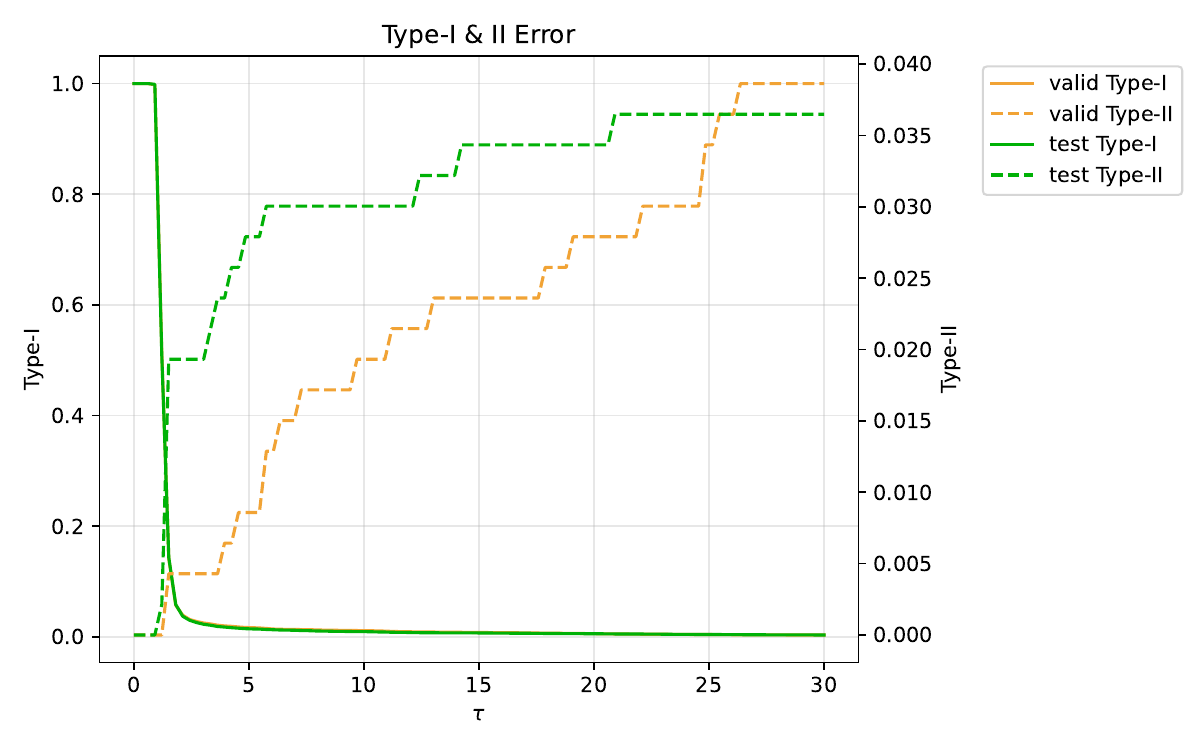}
        \caption{Backdoor (MAD \(=0.0010\)).}
        \label{fig:type_error_backdoor}
    \end{subfigure}
    \hfill
    \begin{subfigure}{0.44\textwidth}
        \centering
        \includegraphics[width=\linewidth]{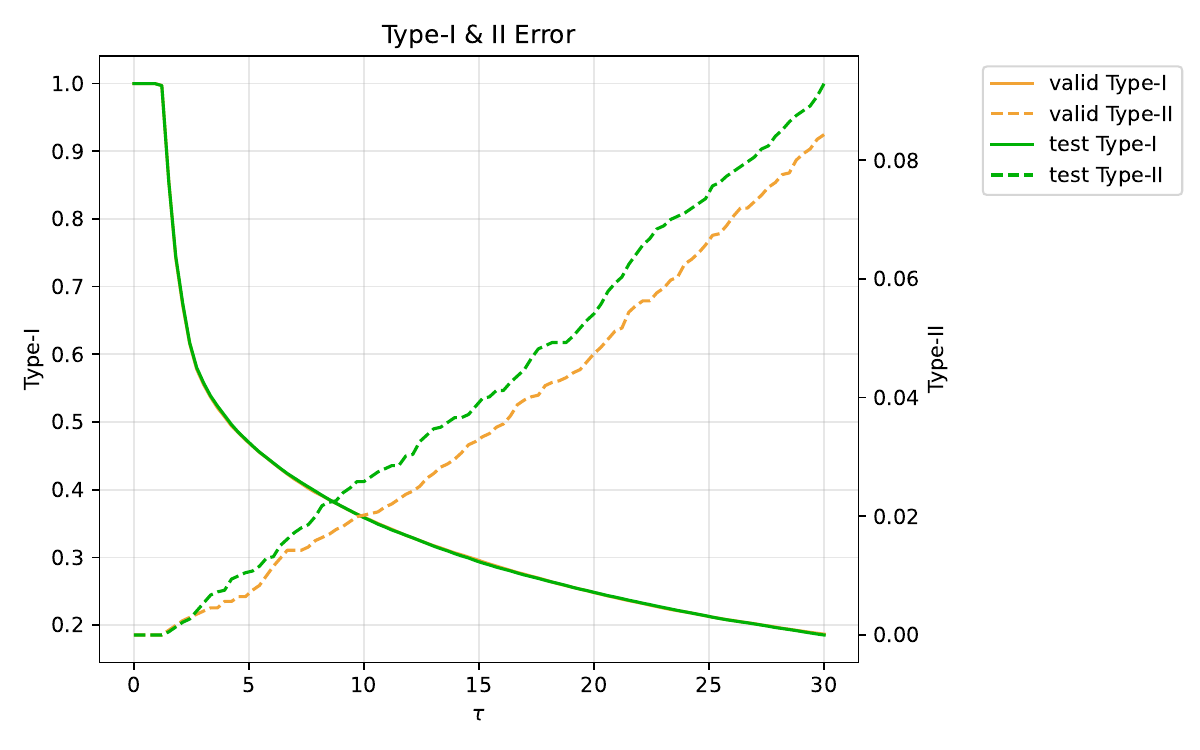}
        \caption{Census (MAD \(=0.0004\)).}
        \label{fig:type_error_census}
    \end{subfigure}

    \caption{
    Type-I (solid) and Type-II (dashed) errors as functions of the inference threshold \(\tau\) across datasets. 
    The reported mean absolute deviations (MAD) between the validation and test Type-I curves are uniformly small, indicating that the calibrated decision rule generalizes stably across data splits.
    }
    \label{fig:type_error_all}
\end{figure}

For the TCGA dataset, we select Rectum Adenocarcinoma as the minority class, corresponding to an minority-class proportion of  \(1\%\). 
Even in this high-dimensional biomedical setting, the validation and test Type-I curves remain closely matched, with a mean absolute deviation of only \(0.0031\), indicating that the proposed calibration mechanism remains stable beyond tabular benchmarks.

A similar pattern is observed on the tabular datasets. 
On Credit Card (Figure~\ref{fig:type_error_credit}), the mean absolute deviation between the validation and test Type-I curves is only \(0.0003\), corresponding to roughly \(17\) majority samples out of more than \(56{,}000\). 
Similar stability is observed on the Backdoor (Figure~\ref{fig:type_error_backdoor}) and Census (Figure~\ref{fig:type_error_census}) datasets.
Overall, these results indicate that the proposed score induces stable error profile across validation and test splits, enabling reliable threshold calibration without overfitting to a particular validation split.

Across the entire range of \(\tau\), the Type-I curves for different data splits remain closely aligned, indicating that the learned majority reference induces a highly stable decision boundary under threshold variation. 
By contrast, the Type-II curves exhibit more noticeable variation.
This is expected because the minority class is substantially smaller and therefore more sensitive to sampling variability and the stochasticity of fine-tuning. 
At the same time, increasing \(\tau\) enlarges the acceptance region for the majority class, which monotonically reduces the Type-I error but increases the miss-detection rate. 
Together, these patterns illustrate the fundamental trade-off between statistical conservativeness and anomaly sensitivity.

\begin{table}[ht]
\centering
\caption{Type-II (at $\text{Type-I} = 0.01$) and Type-I (at $\text{Type-II} = 0.1$) errors on test set after calibrating the threshold on the validation set. Best results for each metric are highlighted in bold.}
\label{tab:type_error_comparison}
\resizebox{\textwidth}{!}{%
    \begin{tabular}{cc|ccccc|c}
    \toprule
    \textbf{Dataset} & 
    \textbf{Error Type} & 
    DeepSVDD &
    {DevNet} &
    {FeaWAD}&
    {DeepSAD} &
    {PReNet} &
     {VAE-inf} \\
    \midrule
    \multirow{2}{*}{Credit Card (0.17\%)}
    & Type-II $\downarrow$ & 0.3571 & 0.1939 & 0.1122 & 0.1122 & \textbf{0.1020} & \textbf{0.1020} \\
    & Type-I $\downarrow$  & 0.4461 & 0.0595 & 0.1205 & 0.1441 & 0.0683 &  \textbf{0.0455}\\
    \midrule

    \multirow{2}{*}{Backdoor (2.44\%)} 
    & Type-II $\downarrow$ & 0.8712 & 0.0730 & 0.0901 & \textbf{0.0300} & 0.1202 & \textbf{0.0300} \\
    & Type-I $\downarrow$  & 0.8792 & 0.0030 & 0.0052 & \textbf{0.0002} & 0.0549 & {0.0003} \\
    \midrule
    
    \multirow{2}{*}{Backdoor (0.20\%)} 
    & Type-II $\downarrow$ & 0.5021 & 0.0708 & 0.1524 & 0.0708 & 0.1288 & \textbf{0.0536} \\
    & Type-I $\downarrow$  & 0.4312 & 0.0025 & 0.0476 & 0.0008 & 0.0489 & \textbf{0.0007} \\
    \midrule
    
    \multirow{2}{*}{Census (6.20\%)} 
    & Type-II $\downarrow$ & 0.9906 & 0.8810 & 0.8333 & 0.7052 & 0.6877 & \textbf{0.6330} \\
    & Type-I $\downarrow$  & 0.9416 & 0.2502 & 0.1933 & 0.2958 & 0.1740 & \textbf{0.1647} \\
    \midrule

    \multirow{2}{*}{Census (0.21\%)} 
    & Type-II $\downarrow$ & 0.9833 & 0.7833 & 0.8883 & 0.8331 & 0.7491 & \textbf{0.6828} \\
    & Type-I $\downarrow$  & 0.6724 & 0.2587 & 0.3289 & 0.5624 & 0.2661 & \textbf{0.2494} \\
    
    \midrule
    \multirow{2}{*}{TCGA (1.00\%)} 
    & Type-II $\downarrow$ & 1.0000 & 0.6190 & 0.5714 & \textbf{0.1429} & 0.4286 & \textbf{0.1429} \\
    & Type-I $\downarrow$  & 0.7933 & 0.0333 & 0.0304 & 0.0087 & 0.0251 & \textbf{0.0082} \\
    \bottomrule
    \end{tabular}
    }
\end{table}

Table~\ref{tab:type_error_comparison} reports the Type-II error under a prescribed Type-I error level (\(\delta=0.01\)) and the Type-I error under a prescribed Type-II error level (\(\delta=0.1\)), where the corresponding decision threshold is calibrated on the validation set and then evaluated on the test set. 
For completeness, we also report the test Type-I error at the validation-selected operating point whose validation Type-II is set to 0.1.
The proposed method consistently achieves the best or near-best performance across all datasets, demonstrating its effectiveness in meeting target error constraints while maintaining stable generalization under the distribution shift from validation to test data.

Overall, these results verify that the proposed inference procedure provides reliable and interpretable error control: for a user-specified target error level \(\delta\), the calibration step selects the corresponding decision threshold so as to satisfy the desired constraint.
This enables practitioners to specify operational requirements directly in terms of error tolerance, rather than tuning an ad hoc threshold parameter.

\section{Conclusion}

This paper presented VAE-Inf: a two-stage generative framework for imbalanced classification that combines deep representation learning with statistically principled decision-making.
By training a VAE solely on majority-class data, the proposed method constructs a reliable latent reference distribution that captures the intrinsic structure of normal samples without contamination from scarce minority data.
A distribution-aware fine-tuning stage then exploits limited minority information to enhance separability in latent space through variance-normalized projection statistics.

A key advantage of the proposed approach lies in its statistical interpretability.
The projection-based score naturally corresponds to a hypothesis testing statistic under the learned majority distribution, enabling a transparent acceptance--rejection mechanism.
Moreover, the introduction of an empirical quantile calibration procedure provides exact finite-sample control of the Type-I error under minimal exchangeability assumptions, independent of the correctness of the learned generative model.
This distinguishes our method from existing imbalanced learning approaches that rely on heuristic thresholds or asymptotic approximations.

Experiment results across diverse datasets demonstrate that the proposed VAE-Inf achieves superior classification performance for highly imbalanced data while offering explicit Type-I error guarantees.
Beyond imbalanced classification, the methodology developed in this work suggests a broader perspective on integrating deep generative models with distribution-free statistical inference.
Future work may explore extensions to multi-class imbalance, adaptive projection strategies, and alternative generative priors, as well as applications to other anomaly-sensitive domains such as biomedical analysis and safety-critical systems.

\bibliography{main}
\bibliographystyle{tmlr}

\appendix
\section{Appendix}
\subsection{Related Work} \label{appendix:related_work}

Deep anomaly detection (AD) under limited supervision has received increasing attention in recent years.
In practice, some labeled data is often available and should be fully exploited.
Leveraging such labels effectively to learn expressive representations of normality and abnormality is therefore essential for accurate anomaly detection~\citep{pang2021deep}.
Existing methods exploit this weak supervision in different ways, which can be broadly grouped into geometric compactness modeling, statistical score learning, reconstruction-guided representation learning, and relational supervision.

A representative line of work focuses on {geometric compactness} of normal data in latent space.
{DeepSAD}~\citep{ruff2020deep} extends Deep SVDD~\citep{ruff2018deep} to the semi-supervised setting by incorporating labeled normal and anomalous samples into a unified objective. Unlabeled and normal samples are encouraged to concentrate around a latent center, while labeled anomalies are pushed away, yielding low-entropy representations for normal data. This framework effectively leverages limited supervision to improve robustness over purely unsupervised approaches.

Another line of work formulates AD as {direct statistical score learning}.
{DevNet}~\citep{pang2019deep} proposes to optimize anomaly scores against a predefined Gaussian reference distribution using a deviation loss. Normal samples are constrained to stay near the reference mean, while labeled anomalies are encouraged to deviate significantly, resulting in statistically interpretable decision thresholds. This design enables efficient training and scalability, and provides an explicit connection between anomaly scores and significance levels. 

To better capture the intrinsic structure of normal data, recent methods integrate {reconstruction-based representation learning} with discriminative objectives.
{FeaWAD}~\citep{zhou2021feature} employs an autoencoder to model the normal data manifold and represents each sample using a three-factor feature tuple $(h, r, e)$, corresponding to latent embedding, reconstruction residual, and reconstruction error magnitude. These features are subsequently fed into a discriminative scoring network. 

More recently, {relational supervision} has been proposed to further amplify weak supervision.
{PReNet}~\citep{pang2023deep} constructs pairwise relationships among samples (normal--normal, normal--anomaly, anomaly--anomaly) and learns to predict their relative anomaly ordering. This pairwise formulation significantly enlarges the effective supervision signal and improves generalization to unseen anomalies.

In summary, existing AD methods leverage limited labels through diverse mechanisms, ranging from geometric constraints and reference-based scoring to manifold-aware representations and relational learning. While effective, most approaches either rely on simplified assumptions about the normal distribution or produce heuristic anomaly scores without an explicit, data-driven statistical decision rule. These limitations motivate approaches that jointly learn faithful majority-class representations and enable interpretable, statistically grounded inference under severe class imbalance.

\subsection{Finite-Sample Guarantee via Empirical Quantile Calibration}\label{Chapter:finite-sample control}

In this section, we provide a rigorous proof for the Type-I error control achieved by the calibration scheme. 
Theorem~\ref{thm:calibration} establishes a finite-sample guarantee under the sole assumption of exchangeability of the scores.

\begin{proof}
If $k=n_{\mathrm{cal}}+1$, then by definition $\tau_\delta=+\infty$, and therefore
\[
\mathbb{P}\big(S_{n_{\mathrm{cal}}+1}\le \tau_\delta\big)=1\ge 1-\delta.
\]
Thus, it remains to consider the case $k\le n_{\mathrm{cal}}$, for which $\tau_\delta=S_{(k)}$.

By exchangeability and the assumption that the scores are almost surely distinct, the rank of $S_{n_{\mathrm{cal}}+1}$ among
\[
\{S_1,\dots,S_{n_{\mathrm{cal}}},S_{n_{\mathrm{cal}}+1}\}
\]
is uniformly distributed over $\{1,\dots,n_{\mathrm{cal}}+1\}$. That is, for every
$j\in\{1,\dots,n_{\mathrm{cal}}+1\}$,
\[
\mathbb{P}\big(\operatorname{rank}(S_{n_{\mathrm{cal}}+1})=j\big)
=
\frac{1}{n_{\mathrm{cal}}+1}.
\]

Since $\tau_\delta=S_{(k)}$, the event $S_{n_{\mathrm{cal}}+1}\le \tau_\delta$
is equivalent to the event that the rank of $S_{n_{\mathrm{cal}}+1}$ is at most $k$. Hence
\begin{align*}
\mathbb{P}\big(S_{n_{\mathrm{cal}}+1}\le \tau_\delta\big)
=
\mathbb{P}\big(\operatorname{rank}(S_{n_{\mathrm{cal}}+1})\le k\big) 
=
\frac{k}{n_{\mathrm{cal}}+1}.
\end{align*}
Substituting
\[
k=\left\lceil (1-\delta)(n_{\mathrm{cal}}+1)\right\rceil
\]
and the elementary inequality $\lceil x\rceil\ge x$, we obtain
\[
\frac{k}{n_{\mathrm{cal}}+1}
\ge
\frac{(1-\delta)(n_{\mathrm{cal}}+1)}{n_{\mathrm{cal}}+1}
=
1-\delta.
\]
Therefore,
\[
\mathbb{P}\big(S_{n_{\mathrm{cal}}+1}\le \tau_\delta\big)\ge 1-\delta.
\]
This completes the proof.
\end{proof}

\subsection{Experimental Setting} \label{appendix:exp_details}
\subsubsection{Dataset Description} \label{appendix:dataset_description}
\paragraph{Tabular datasets.}
We consider three representative tabular imbalance classification datasets that cover financial risk assessment, network intrusion detection, and socioeconomic analysis.
The \emph{Credit Card Fraud Detection} dataset from Kaggle exhibits an extreme minority-class proportion of $0.17\%$, reflecting realistic fraud detection scenarios.
The \emph{Backdoor Attack Detection} dataset is derived from the UNSW-NB15 benchmark
\citep{moustafa2015unsw} and contains $2.44\%$ anomalous traffic instances, representing
a modern network security challenge. 
The \emph{Census} dataset is constructed from the U.S.\ Census Bureau database, where the task 
is to identify individuals with income exceeding 50K dollars per year; 
only $6.2\%$ of the population belongs to this high-income group.

These datasets span diverse domains and imbalance levels, providing a broad evaluation of our method. 
A summary of dataset description is presented in Table~\ref{tab:dataset_overview}.

\begin{table}[ht]
    \centering
    \caption{Overview of the tabular datasets. The minority-class proportion is defined as $\rho = N_2/(N_1+N_2)$.}
    \label{tab:dataset_overview}
    \renewcommand{\arraystretch}{1.2}
    \begin{tabular}{l|c|c|c|c|c}
        \toprule
        \textbf{Dataset} & \textbf{\#Samples} & \textbf{Features} & \textbf{\#Minority Samples} & \textbf{Minority Prop.}& \textbf{Category} \\
        \midrule
        Credit Card     & 284{,}807  & 30   & 492   & 0.17\% & Finance \\
        Backdoor  & 95{,}329   & 196  & 2{,}329 & 2.44\% & Network \\
        Census    & 299{,}285  & 500  & 18{,}568    & 6.20\% & Sociology \\
        \bottomrule
    \end{tabular}
\end{table}

All tabular features are standardized to zero mean and unit variance.
For consistency across benchmarks, each dataset is partitioned into training, validation,
and test splits with a ratio of $6{:}2{:}2$, and the minority-class proportion is preserved within each split.

\paragraph{Image datasets.}
Following the experimental protocol of DeepSAD~\citep{ruff2020deep}, 
we evaluate our method on two benchmark image datasets: 
MNIST, and CIFAR-10.
In the \emph{no-pollution} setting, one class is designated as the majority class, 
while one of the remaining nine classes is randomly selected as the minority class. 
This procedure is repeated for all $10$ majority classes and all $9$ minority classes,
resulting in a total of $90$ experiments per dataset.

To simulate realistic extreme imbalance scenarios, the fraction of labeled training anomalies is set to
\(
    \rho = \frac{N_2}{{N_1} + N_{2}} = 0.005.
\)
The same sampling ratio is applied consistently to both the training and test sets.
Performance is reported as the average values across all 90 runs.

\paragraph{TCGA Pan-Cancer dataset.}
We further evaluate our method on a large-scale biomedical dataset: 
the TCGA Pan-Cancer cohort obtained from the UCSC Xena platform
(\url{http://xena.ucsc.edu}). 
We use the \texttt{genomicMatrix} representation, where each row corresponds to a patient sample 
and each column corresponds to a genomic feature (molecular identifier).
The dataset consists of $10{,}459$ samples and $20{,}531$ genomic features across $33$ distinct 
cancer types. 
The class distribution is highly imbalanced, as shown in Table~\ref{tab:tcga_distribution}.

\begin{table}[ht]
    \centering
    \caption{Distribution of phenotype classes in the TCGA pan-cancer dataset.}
    \label{tab:tcga_distribution}
    \renewcommand{\arraystretch}{1.1}
    \begin{tabular}{lrr}
        \toprule
        \textbf{Class} & \textbf{Occurrences} & \textbf{Proportion} \\
        \midrule
        Breast invasive carcinoma & 1218 & 11.65\% \\
        Kidney clear cell carcinoma & 606 & 5.79\% \\
        Lung adenocarcinoma & 576 & 5.51\% \\
        Thyroid carcinoma & 572 & 5.47\% \\
        Head \& neck squamous cell carcinoma & 566 & 5.41\% \\
        Lung squamous cell carcinoma & 553 & 5.29\% \\
        Prostate adenocarcinoma & 550 & 5.26\% \\
        Brain lower grade glioma & 530 & 5.07\% \\
        Skin cutaneous melanoma & 474 & 4.53\% \\
        Stomach adenocarcinoma & 450 & 4.30\% \\
        Bladder urothelial carcinoma & 426 & 4.07\% \\
        Liver hepatocellular carcinoma & 423 & 4.04\% \\
        Colon adenocarcinoma & 329 & 3.15\% \\
        Kidney papillary cell carcinoma & 323 & 3.09\% \\
        Cervical \& endocervical cancer & 308 & 2.94\% \\
        Ovarian serous cystadenocarcinoma & 308 & 2.94\% \\
        Sarcoma & 265 & 2.53\% \\
        Uterine corpus endometrioid carcinoma & 201 & 1.92\% \\
        Esophageal carcinoma & 196 & 1.87\% \\
        Pheochromocytoma \& paraganglioma & 187 & 1.79\% \\
        Pancreatic adenocarcinoma & 183 & 1.75\% \\
        Acute myeloid leukemia & 173 & 1.65\% \\
        Glioblastoma multiforme & 172 & 1.64\% \\
        Testicular germ cell tumor & 156 & 1.49\% \\
        Thymoma & 122 & 1.17\% \\
        Rectum adenocarcinoma & 105 & 1.00\% \\
        Kidney chromophobe & 91 & 0.87\% \\
        Mesothelioma & 87 & 0.83\% \\
        Uveal melanoma & 80 & 0.76\% \\
        Adrenocortical cancer & 79 & 0.76\% \\
        Uterine carcinosarcoma & 57 & 0.54\% \\
        Diffuse large B-cell lymphoma & 48 & 0.46\% \\
        Cholangiocarcinoma & 45 & 0.43\% \\
        \bottomrule
    \end{tabular}
\end{table}

The imbalance is substantial: the most common subtype 
(\emph{breast invasive carcinoma}) constitutes over $11\%$ of the dataset, 
whereas several rare malignancies, such as \emph{cholangiocarcinoma} and 
\emph{Mesothelioma}, each account for fewer than $1\%$ of samples.
This makes TCGA a highly challenging and biologically meaningful benchmark.
For evaluation, we adopt a one-vs-rest protocol. Each of the $33$ cancer types is treated in turn as the minority class, 
while samples from the remaining $32$ types form the majority class. This yields $33$ separate experiments.
In addition to reporting the average performance over all $33$ one-vs-rest tasks, we also report results on a selected rare-cancer subset with minority proportion $\rho \leq 1\%$, consisting of 
\emph{rectum adenocarcinoma}, \emph{uterine carcinosarcoma}, 
\emph{kidney chromophobe}, \emph{cholangiocarcinoma}, and 
\emph{adrenocortical cancer}.

\subsubsection{Implementation Details}\label{appendix:implementation_details}

All baseline methods are implemented following the hyperparameters and model configurations reported in their original papers. 
For consistency and reproducibility, we adopt the codes from original papers and open-source implementations from DeepOD~\citep{xu2023deep, xu2024calibrated} and ADBench~\citep{han2022adbench}, which provide standardized training pipelines and widely validated configurations for deep anomaly detection.

Our approach employs a vanilla VAE backbone in Stage~1, with the architecture adapted to the structure of each data domain. 
For tabular datasets, we use a lightweight fully connected encoder--decoder, where both the encoder and decoder contain two hidden layers. 
For image datasets, MNIST uses a convolutional encoder--decoder with residual blocks, whereas CIFAR-10 adopts a deeper residual convolutional architecture. 
For the TCGA Pan-Cancer dataset, which contains over $20{,}000$ genomic features, we use a high-capacity MLP-based VAE with residual feedforward blocks.
During Stage~2, the model is fine-tuned using the proposed projection-based regularization. The hyperparameters $\alpha$ and $\beta$ are selected on the validation set from a predefined grid and then fixed for test-set evaluation. 
The latent dimension $d_l$ and the hyperparameters $\alpha$ and $\beta$ are summarized in Table~\ref{tab:model_hyperparameters}.

\begin{table}[ht]
\centering
\caption{Hyperparameter settings used in the experiments, including the latent dimension $d_l$ and the validation-selected Stage~2 hyperparameters $\alpha$ and $\beta$.}
\label{tab:model_hyperparameters}
\begin{tabular}{lccc}
\toprule
\textbf{Dataset} & $\boldsymbol{d_l}$ & $\boldsymbol{\alpha}$ & $\boldsymbol{\beta}$ \\
\midrule
Credit Card & 16 & 16 & 2 \\
Backdoor & 20 & 25 & 16 \\
Census & 50 & 25 & 10 \\
\midrule
MNIST & 16 & 16 & 10 \\
CIFAR-10 & 64 & 16 & 10 \\
\midrule
TCGA Pan-Cancer & 128 & 16 & 2 \\
\bottomrule
\end{tabular}
\end{table}
For all experiments, we set the batch size to $128$. 
The numbers of training epochs in Stage~1 and Stage~2 are set to 
$E_1=200$ and $E_2=100$, respectively. 
The corresponding learning rates are $1\times10^{-4}$ for Stage~1 and 
$2\times10^{-3}$ for Stage~2. 
For projection-based scoring, we sample $32$ random projection vectors to estimate the anomaly score. 
All models are optimized using Adam with default momentum parameters, and early stopping is applied based on the validation loss to prevent overfitting.

\subsubsection{Evaluation Metrics}\label{appendix:metrics_discussion}

We evaluate model performance using three widely adopted metrics: the Area Under the Receiver Operating Characteristic Curve, the Area Under the Precision--Recall Curve (AUC-PR) and F1-score.

AUC-ROC measures the trade-off between true positive and false positive rates across all decision thresholds. Although widely used, AUC-ROC can be misleading under severe class imbalance, as its value may be dominated by the abundant normal samples rather than the minority-class samples.
In contrast, AUC-PR focuses explicitly on the performance of the positive (minority) class by measuring precision and recall. As demonstrated in~\cite{saito2015precision}, AUC-PR is a more informative indicator than AUC-ROC in highly imbalanced settings, where even a large improvement in identifying minority-class samples may produce only marginal changes in AUC-ROC. 
We compute AUC-PR following the standard average-precision formulation in~\cite{schutze2008introduction}. 

Finally, we report the F1-score as a threshold-dependent metric that balances precision and recall. Since most baseline methods do not provide explicit decision thresholds, we adopt a unified strategy based on the minority-class proportion reported in Table~\ref{tab:dataset_overview} for threshold selection.
For example, in Credit Card Fraud Detection dataset, where the minority-class proportion is $0.17\%$, the top $0.17\%$ of samples ranked by anomaly scores are predicted as minority class.
This strategy ensures a consistent and fair comparison on  classification performance across methods by fixing the predicted minority proportion.

\subsection{Ablation and Sensitivity Analysis}

\subsubsection{Effect of Stage-2 fine-tuning}
Table~\ref{tab:ablation_results} reports the performance of the proposed VAE-Inf before and after Stage~2 on tabular datasets.
A consistent and substantial improvement is observed in all evaluation metrics.
In particular, Stage~1 alone yields limited discriminative performance, especially on highly imbalanced datasets. This is consistent with the observation in the latent space, where minority samples are not well separated from the majority manifold.
After incorporating Stage~2, all metrics improve significantly, demonstrating that Stage~2 effectively enhances the discriminative capability of the learned representation.

\begin{table}[ht]
    \centering
    \caption{Performance comparison between Stage~1 and Stage~2 (AUC-ROC / AUC-PR / F1-score, all in percentage).
    Best results for each metric are highlighted in bold. 
    All results are averaged over ten independent runs and presented as mean~$\pm$~standard deviation.}
    \label{tab:ablation_results}
    \renewcommand{\arraystretch}{1.1}
    \begin{tabular}{l|ccc}
        \toprule
        \textbf{Method} & \textbf{AUC-ROC} $\uparrow$ & \textbf{AUC-PR} $\uparrow$ & \textbf{F1-score} $\uparrow$ \\
        \midrule
        \multicolumn{4}{c}{\textbf{Credit Card} ($\rho=0.17\%$)} \\
        \midrule
         Stage~1     & {87.51}$\pm$1.48 & {4.30}$\pm$0.58 & {8.27}$\pm$1.25 \\
        Stage~1\&2      & \textbf{97.48}$\pm$0.31 & \textbf{85.61}$\pm$1.45 & \textbf{83.57}$\pm$1.87 \\
        \midrule
        
        \multicolumn{4}{c}{\textbf{Backdoor} ($\rho=2.44\%$)} \\
        \midrule
        Stage~1     & {77.89}$\pm$2.71 & {29.30}$\pm$2.17 & {41.52}$\pm$1.45 \\
        Stage~1\&2      & \textbf{99.31}$\pm$0.13 & \textbf{97.41}$\pm$0.63 & \textbf{93.60}$\pm$1.01 \\
        \midrule
        
        \multicolumn{4}{c}{\textbf{Census} ($\rho=6.20\%$)} \\
        \midrule
        Stage~1     & {42.80}$\pm$1.72 & {5.51}$\pm$0.17 & {5.37}$\pm$0.27 \\
        Stage~1\&2      & \textbf{93.88}$\pm$0.08 & \textbf{59.04}$\pm$0.39 & \textbf{55.70}$\pm$0.42 \\

        \bottomrule
    \end{tabular}
\end{table}

\subsubsection{Sensitivity to Hyperparameters}

We examine the influence of the hyperparameters $\alpha$ and $\beta$ in the proposed distribution-aware regularization term Eq.~(\ref{eq:reg_loss}).
The parameter $\alpha$ controls the extent of the latent confidence region of the majority distribution, 
while $\beta$ regulates the strength with which minority embeddings are encouraged to lie outside this region. 
The larger $\alpha$ allows for a wider admissible range for majority samples, whereas the larger $\beta$ increases the penalty for minority embeddings that remain close to the majority reference region.

Figure~\ref{fig:test_auprc_vs_beta} presents a sensitivity analysis of AUC-PR with respect to $\beta$ under different choices of $\alpha$ on the Credit Card dataset. 
Within the considered grid, the most favorable performance on this dataset is observed at $(\alpha=16,\beta=2)$. 
The results show that the performance is sensitive to the balance between majority tolerance and minority separation. 
Extremely small $\alpha$ (overly restrictive boundaries) or excessively large $\beta$ (over-separation) both lead to performance degradation, indicating that balanced regularization is essential for stable discrimination under high imbalance.
The selected values of $\alpha$ and $\beta$ vary across datasets and are determined through validation set selection in the main experiments.

\begin{figure}[ht]
    \centering
    \includegraphics[width=0.64\textwidth]{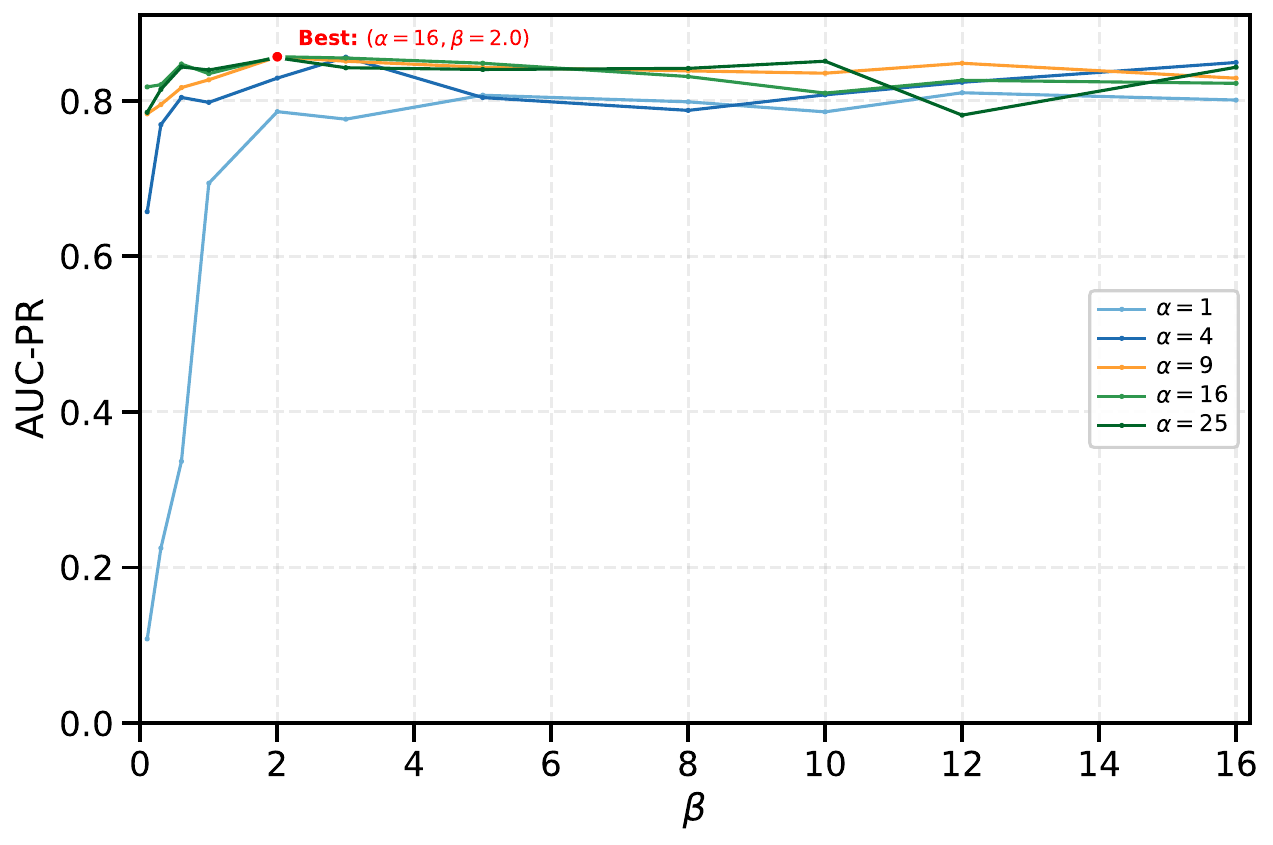}
    \caption{Sensitivity of AUC-PR to $\beta$ under different choices of $\alpha$ on the Credit Card dataset. 
    Optimal performance occurs at $(\alpha=16, \beta=2)$.}
    \label{fig:test_auprc_vs_beta}
\end{figure}

\subsection{Image Datasets Results}

Table~\ref{tab:image_dataset_results} reports results on MNIST and CIFAR-10 under the $\rho = 0.005$ protocol. Since most baselines are designed for tabular inputs, we compare our method with DeepSAD, the only baseline offering official support for image anomaly detection. All results are averaged over 90 experiments spanning all normal–anomaly class combinations.

\begin{table}[ht]
    \centering
    \caption{Comparison of AUC-ROC and AUC-PR on image datasets (
    $\rho = 0.005$). Results are averaged over 90 experiments.}
    \label{tab:image_dataset_results}
    \renewcommand{\arraystretch}{1.2}
    \begin{tabular}{l|ccc}
        \toprule
        \textbf{Method} & \textbf{AUC-ROC} $\uparrow$ & \textbf{AUC-PR} $\uparrow$& \textbf{F1-score} $\uparrow$\\
        \midrule
        \multicolumn{4}{c}{\textbf{MNIST}} \\
        \midrule

        DeepSAD  & 95.22±3.61 & 67.36±18.20 & 62.59±16.38 \\
        VAE-inf (Ours)    & \textbf{95.25±3.43}  & \textbf{73.94±14.38} & \textbf{68.04±13.74} \\
        \midrule
        \multicolumn{4}{c}{\textbf{CIFAR-10}} \\
        \midrule
        DeepSAD   & \textbf{72.54±8.35} & 18.53±10.18 & 21.31±10.06 \\
        VAE-inf (Ours)  & 71.46±7.62 & \textbf{62.07±11.60} & \textbf{55.55±9.24} \\
        \bottomrule
    \end{tabular}
\end{table}

Across both datasets, our method achieves competitive AUC-ROC and consistently stronger AUC-PR and F1-score performance. On MNIST, where anomaly structure is relatively simple, both methods obtain similar AUC-ROC, but our approach yields noticeably higher AUC-PR ($+6.6$ points on average) and F1-score, indicating more precise identification of rare minority-class samples. The advantage becomes larger on CIFAR-10 dataset. While DeepSAD attains a slightly higher AUC-ROC, its AUC-PR drops sharply due to poor precision under extreme imbalance. In contrast, our distribution-aware scoring achieves a large improvement in AUC-PR (from $18.53$ to $62.07$) and F1-score, reflecting a significantly enhanced ability to rank and classify anomalous image samples. Overall, the image results demonstrate that the proposed VAE-Inf remains effective beyond tabular domains.

\subsection{Additional Error-Control and Calibration Results}

\subsubsection{Stability across Training, Validation, and Test Splits}

We evaluate the stability of the proposed decision rule across the training, validation, and test splits of the Credit Card dataset, using hyperparameters $\alpha=16$ and $\beta=2$. The three subsets preserve the same minority-class proportion (approximately $0.17\%$), enabling a controlled examination of error behaviors. Sample statistics are summarized below:
\begin{itemize}
    \item \textbf{Training:} 170{,}883 samples, 295 minority (0.1726\%)
    \item \textbf{Validation:} 56{,}962 samples, 99 minority (0.1738\%)
    \item \textbf{Test:} 56{,}962 samples, 98 minority (0.1720\%)
\end{itemize}

Table~\ref{tab:confusion_matrices} reports the quantitative results obtained by using the same threshold parameter $\tau=16$ during training and directly deploying it for inference on three sets.
A notable observation is the remarkable stability of the Type-I error, which remains within the narrow range of $0.11\%$–$0.16\%$ across all splits. 
This confirms that the latent boundary learned in Stage~1 is well calibrated: normal samples consistently fall within the estimated majority region, and the decision threshold based on statistical score preserves its validity across independent partitions.
Type-II error exhibits larger variability, increasing from $6.44\%$ in the training set to $17.17\%$ on the validation split, and decreasing again to $12.24\%$ on the test set. This fluctuation is expected due to the limited number of minority samples and the stochasticity inherent in fine-tuning.

\begin{table}[ht]
    \centering
    \caption{Confusion matrices for training, validation, and test sets. 
    Rows correspond to true labels and columns to predicted labels.}
    \label{tab:confusion_matrices}
    \renewcommand{\arraystretch}{1.15}
    \setlength{\tabcolsep}{3pt}
    
    \begin{minipage}{0.3\textwidth}
        \centering
        \begin{tabular}{ccc}
        \toprule
        \textbf{True$\backslash$Pred.} & Maj. & Min. \\
        \midrule
        Maj. & \textbf{170,393} & 195 \\
        Min. & 19 & \textbf{276} \\
        \bottomrule
        \end{tabular}
        \vspace{3pt}
        (a) Training set
    \end{minipage}
    \hfill
    \begin{minipage}{0.3\textwidth}
        \centering
        \begin{tabular}{ccc}
        \toprule
        \textbf{True$\backslash$Pred.} & Maj. & Min. \\
        \midrule
        Maj. & \textbf{56,785} & 78 \\
        Min. & 17 & \textbf{82} \\
        \bottomrule
        \end{tabular}
        \vspace{3pt}
        (b) Validation set
    \end{minipage}
    \hfill
    \begin{minipage}{0.3\textwidth}
        \centering
        \begin{tabular}{ccc}
        \toprule
        \textbf{True$\backslash$Pred.} & Maj. & Min. \\
        \midrule
        Maj. & \textbf{56,772} & 92 \\
        Min. & 12 & \textbf{86} \\
        \bottomrule
        \end{tabular}
        \vspace{3pt}
        (c) Test set
    \end{minipage}
\end{table}

\subsubsection{Calibrated Type-I and Type-II Error Control}
To further assess the practical utility of empirical calibration, we select $\tau$ on the validation set to achieve a target Type-I error level of $0.01$ and evaluate the resulting Type-II error on both validation and test sets. 
As shown in Table~\ref{tab:type_error_summary_1}, the calibrated test-set Type-I errors remain tightly concentrated around the desired level across all datasets, confirming that the rule provides effective false-positive control.
Likewise, the corresponding Type-II errors on the test set closely match those on the validation split.
A complementary analysis using a target Type-II level of 0.1 in Table~\ref{tab:type_error_summary_2} shows the same pattern. Since the empirical Type-II error is discrete for finite minority samples, exact matching to 0.1 is generally impossible. We select the threshold that yields the closest attainable validation Type-II error to 0.1. The resulting test-set errors remain closely aligned with the validation-set calibrations, further highlighting the robustness of the projection-based score.

\begin{table}[ht]
\centering
\caption{Type-I and Type-II errors after calibrating the threshold on the validation set to achieve $\text{Type-I} = 0.01$. Results are reported for both the validation and test sets across all datasets. Numbers in parentheses indicate the corresponding error count divided by the relevant class size.}
\label{tab:type_error_summary_1}
\begin{tabular}{lccc}
\toprule
\textbf{Dataset} & 
\textbf{Split} &
\textbf{Type-I Error} $\downarrow$&
\textbf{Type-II Error} $\downarrow$\\
\midrule

\multirow{2}{*}{Credit Card (0.17\%)} 
& Val  & \textbf{0.0100} (568/56863) & 0.1313 (13/99) \\
& Test & 0.0107 (607/56864) & 0.1020 (10/98) \\
\midrule

\multirow{2}{*}{Backdoor (2.44\%)} 
& Val  & \textbf{0.0100} (186/18600) & 
0.0193 (9/466)
  \\
& Test & 0.0084 (156/18600) &  
 0.0300(14/466) \\
\midrule

\multirow{2}{*}{Backdoor (0.20\%)} 
& Val  & \textbf{0.0100} (186/18600) & 
0.0536(25/466)  \\
& Test & 0.0084 (156/18600) & 
0.0536 (25/466) \\
\midrule

\multirow{2}{*}{Census (6.20\%)} 
& Val  & \textbf{0.0100} (561/56144) & 0.6154 (2285/3713) \\
& Test &  0.0102 (572/56143) & 0.6330 (2351/3714) \\

\midrule

\multirow{2}{*}{Census (0.21\%)} 
& Val  & \textbf{0.0100} (561/56144) & 0.6601 (2451/3713) \\
& Test & 0.0103 (577/56143) & 0.6828 (2536/3714) \\

\bottomrule
\end{tabular}
\end{table}

\begin{table}[ht]
\centering
\caption{Type-I and Type-II errors after calibrating the threshold on the validation set to obtain the closest attainable empirical Type-II error to 0.1. Results are reported for both the validation and test sets. Numbers in parentheses indicate the corresponding error count divided by the relevant class size.}
\label{tab:type_error_summary_2}
    \begin{tabular}{lccc}
    \toprule
    \textbf{Dataset} & 
    \textbf{Split} &
    \textbf{Type-I Error} $\downarrow$&
    \textbf{Type-II Error} $\downarrow$\\

    \midrule
    
    \multirow{2}{*}{Credit Card (0.17\%)} 
    & Val  & 0.0455(2586/56863) & \textbf{0.0909}(9/99) \\
    & Test & 0.0460(2617/56864) & 0.0918(9/98) \\
    \midrule
    
    \multirow{2}{*}{Backdoor (2.44\%)} 

    & Val  &  0.0001(2/18600) & \textbf{0.0987}(46/466)  \\
    & Test & 0.0003(5/18600) & 0.0987(46/466) \\
    \midrule
    
    \multirow{2}{*}{Backdoor (0.20\%)} 
    & Val  &  0.0007(13/18600) & \textbf{0.0987}(46/466)  \\
    & Test & 0.0007(13/18600) & 0.0901(42/466) \\
    \midrule
    
    \multirow{2}{*}{Census (6.20\%)} 
    & Val  & 0.1650(9264/56144) & \textbf{0.0999}(371/3713) \\
    & Test & 0.1647(9246/56143) & 0.1061(394/3714) \\
    \midrule
    
    \multirow{2}{*}{Census (0.21\%)} 
    & Val  & 0.2506(14069/56144) & \textbf{0.0999}(371/3713) \\
    & Test & 0.2494(14002/56143) & 0.1010(375/3714) \\
    
    \bottomrule
    \end{tabular}
\end{table}

\end{document}